\def\B{\mathcal{B}}
\def\R{\mathbb{R}}
\def\E{\mathbb{E}}
\def\F{\mathcal{F}}
\def\P{\mathbb{P}}
\def\1{\mathbf{1}}
\def\sign{\mbox{sign}}
\newtheorem{theorem}{Theorem}
\newtheorem{lemma}{Lemma}
\title{Query Complexity of Derivative-Free Optimization}
\author{Kevin G. Jamieson\\
University of Wisconsin\\
Madison, WI 53706, USA \\
\texttt{\small kgjamieson@wisc.edu}
\And 
Robert D. Nowak\\
University of Wisconsin\\
Madison, WI 53706, USA \\
\texttt{\small nowak@engr.wisc.edu}
\And
Benjamin Recht\\
University of Wisconsin\\
Madison, WI 53706, USA \\
\texttt{\small brecht@cs.wisc.edu}
}
\begin{document}

\maketitle

\begin{abstract}
This paper provides lower bounds on the convergence rate of Derivative Free Optimization (DFO) with noisy function evaluations, exposing a fundamental and unavoidable gap between the performance of algorithms with access to gradients and those with access to only function evaluations. However, there are situations in which DFO is unavoidable, and for such situations we propose a new DFO algorithm that is proved to be near optimal for the class of strongly convex objective functions.  A distinctive feature of the algorithm is that it uses only Boolean-valued function comparisons, rather than function evaluations.  This makes the algorithm useful in an even wider range of applications, such as optimization based on paired comparisons from human subjects, for example.  We also show that regardless of whether DFO is based on noisy function evaluations or Boolean-valued function comparisons, the convergence rate is the same.
\end{abstract}

\section{Introduction}

Optimizing large-scale complex systems often requires the tuning of many parameters. With training data or simulations one can evaluate the relative merit, or incurred loss, of different parameter settings, but it may be unclear how each parameter influences the overall objective function. In such cases, derivatives of the objective function with respect to the parameters are unavailable. Thus, we have seen a resurgence of interest in Derivative Free Optimization (DFO) \cite{eitrich,oeuvray,conn,PowellLearning, nesterov,srinivas,storn,agarwal2011stochastic}. When function evaluations are noiseless, DFO methods can achieve the same rates of convergence as noiseless gradient methods up to a small factor depending on a low-order polynomial of the dimension \cite{nemirovski,nesterov, protasov}. This leads one to wonder if the same equivalence can be extended to the case when function evaluations and gradients are noisy.

Sadly, this paper proves otherwise.  We show that when function evaluations are noisy,  the optimization error of \emph{any} DFO is  $\Omega(\sqrt{1/T})$, where $T$ is the number of evaluations.  This lower bound holds even for strongly convex functions.  In contrast, noisy gradient methods exhibit $\Theta(1/T)$ error scaling for strongly convex functions \cite{nemirovski,raginskyConvexComplexity}.  A consequence of our theory is that finite differencing cannot achieve the rates of gradient methods when the function evaluations are noisy.

On the positive side, we also present a new derivative-free algorithm that achieves this lower bound with near optimal dimension dependence. Moreover, the algorithm uses only boolean comparisons of function values, not actual function values.  This makes the algorithm applicable to situations in which the optimization is only able to probably correctly decide if the value of one configuration is better than the value of another.   This is especially interesting in optimization based on human subject feedback, where paired comparisons are often used instead of numerical scoring.  The convergence rate of the new algorithm is optimal in terms of $T$ and near-optimal in terms of its dependence on the ambient dimension. Surprisingly, our lower bounds show that this new algorithm that uses only function comparisons achieves the same rate in terms of $T$ as any algorithm that has access to function evaluations. 

\section{Problem formulation and background} \label{problemForm}
We now formalize the notation and conventions for our analysis of DFO. A function $f$ is {\em strongly convex with constant $\tau$} on a convex set $\B\subset \R^d$ if there exists a constant $\tau >0$ such that  
\begin{align*} 
f(y) \geq f(x) + \langle \nabla f(x),y-x \rangle + \frac{\tau}{2} ||x-y||^2
\end{align*}
for all $x,y \in \B$.  The gradient of $f$, if it exists, denoted $\nabla f$, is {\em Lipschitz with constant $L$} if $|| \nabla f(x)-\nabla f(y) || \leq L ||x-y||$ for some $L>0$.  The class of strongly convex functions with Lipschitz gradients defined on a nonempty, convex set $\B \subset \R^n$ which take their minimum in $\B$ with parameters $\tau$ and $L$ is denoted by $\F_{\tau,L,\B}$. 

The problem we consider is minimizing a function $f\in\F_{\tau,L,\B}$. The function $f$ is not explicitly known.  An optimization procedure may only query the function in one of the following two ways.
\begin{description}
\item \textbf{Function Evaluation Oracle:} For any point $x\in\B$ an optimization procedure can observe
\begin{align*}
E_f(x) = f(x) + w
\end{align*}
where $w\in\R$ is a random variable with $\E[w]=0$ and $\E[ w^2] = \sigma^2$. 
\item \textbf{Function Comparison Oracle:} For any pair of points $x,y\in\B$ an optimization procedure can observe a binary random variable $C_f(x,y)$ satisfying
\begin{align} \label{funcCompModel}
\P\left( C_f(x,y) =\sign\{ f(y)-f(x) \} \right)  \geq \frac{1}{2}+ \min\left\{ \delta_0,  \mu |f(y)-f(x)|^{\kappa-1} \right\}
\end{align}
for some $0<\delta_0\leq 1/2$, $\mu > 0$ and $\kappa\geq 1$. When $\kappa=1$, without loss of generality assume $\mu \leq \delta_0 \leq 1/2$.  Note $\kappa=1$  implies that the comparison oracle is correct with a probability that is greater than 1/2 and independent of $x,y$.  If $\kappa>1$, then the oracle's reliability decreases as the difference between $f(x)$ and $f(y)$ decreases.
\end{description}

To illustrate how the function comparison oracle and function evaluation oracles relate to each other, suppose $C_f(x,y) = \sign\{ E_f(y)-E_f(x) \}$ where $E_f(x)$ is a function evaluation oracle with additive noise $w$. If $w$ is Gaussian distributed with mean zero and variance $\sigma^2$ then $\kappa =2$ and $\mu \geq \left(4\pi \sigma^2 e\right)^{-1/2}$ (see Appendix~\ref{distributionParams}). In fact, this choice of $w$ corresponds to Thurston's law of comparative judgment which is a popular model for outcomes of pairwise comparisons from human subjects \cite{thurstone}. If $w$ is a ``spikier'' distribution such as a two-sided Gamma distribution with shape parameter in the range of $(0,1]$ then all values of $\kappa \in (1,2]$ can be realized (see Appendix~\ref{distributionParams}). 

Interest in the function comparison oracle is motivated by certain popular derivative-free optimization procedures that use only  comparisons of function evaluations (e.g. \cite{storn}) and by optimization problems involving human subjects making paired comparisons (for instance, getting fitted for prescription lenses or a hearing aid where unknown parameters specific to each person are tuned with the familiar queries ``better or worse?''). Pairwise comparisons have also been suggested as a novel way to tune web-search algorithms \cite{yueDuelingBandits}. Pairwise comparison strategies have previously been analyzed in the finite setting where the task is to identify the best alternative among a finite set of alternatives (sometimes referred to as the dueling-bandit problem) \cite{yueDuelingBandits,activeRanking}.  The function comparison oracle presented in this work and its analysis are novel. The main contributions of this work and new art are as follows (i) lower bounds for the function evaluation oracle in the presence of measurement noise (ii) lower bounds for the function comparison oracle in the presence of noise and (iii)  an algorithm for the function comparison oracle, which can also be applied to the function evaluation oracle setting, that nearly matches both the lower bounds of (i) and (ii).

We prove our lower bounds for strongly convex functions with Lipschitz gradients defined on a compact, convex set $\B$, and because these problems are a subset of those involving all convex functions (and have non-empty intersection with problems where $f$ is merely Lipschitz), the lower bound also applies to these larger classes. While there are known theoretical results for DFO in the noiseless setting \cite{nemirovskyYudin,nesterov,protasov}, to the best of our knowledge we are the first to characterize lower bounds for DFO in the stochastic setting. Moreover, we believe we are the first to show a novel upper bound for stochastic DFO using a function comparison oracle (which also applies to the function evaluation oracle). However, there are algorithms with upper bounds on the rates of convergence for stochastic DFO with the function evaluation oracle \cite{nemirovskyYudin,agarwalBanditDFO}. We discuss the relevant results in the next section following the lower bounds . 

While there remains many open problems in stochastic DFO (see Section~\ref{conclusion}), rates of convergence with a stochastic gradient oracle are well known and were first lower bounded by Nemirovski and Yudin \cite{nemirovskyYudin}. These classic results were recently tightened to show a dependence on the dimension of the problem \cite{alekh}. And then tightened again to show a better dependence on the noise \cite{raginskyConvexComplexity} which matches the upper bound achieved by stochastic gradient descent \cite{nemirovski}. The aim of this work is to start filling in the knowledge gaps of stochastic DFO so that it is as well understood as the stochastic gradient oracle. Our bounds are based on simple techniques borrowed from the statistical learning literature that use natural functions and oracles in the same spirit of \cite{raginskyConvexComplexity}. 

\section{Main results}
The results below are presented with simplifying constants that encompass many factors to aid in exposition. Explicit constants are given in the proofs in Sections~\ref{lower}~and~\ref{upper}.  Throughout, we denote the minimizer of $f$ as $x_f^*$.  The expectation in the bounds is with respect to the noise in the oracle queries and (possible) optimization algorithm randomization.
\subsection{Query complexity of the function comparison oracle}
\begin{theorem} \label{comparisonThm}
For every $f \in \F_{\tau,L,\B}$ let $C_f$ be a function comparison oracle  with parameters $(\kappa,\mu,\delta_0 )$. Then for $n \geq 8$ and sufficiently large $T$
\begin{align*}
\inf_{\widehat{x}_T} \sup_{f\in \F_{\tau,L,\B} } \E \left[ f(\widehat{x}_T) - f(x_f^*) \right] \geq \begin{cases}  c_1 \exp\left\{ -c_2 \frac{T}{n}  \right\}  & \text{ if } \kappa=1 \vspace{.1in} \\c_3 \left( \frac{n}{T} \right)^{\frac{1}{ 2 (\kappa-1)}} & \text{ if } \kappa > 1 \end{cases}
\end{align*}
{ where the infimum is over the collection of all possible estimators of $x_f^*$ using at most $T$ queries to a function comparison oracle and the supremum is taken with respect to all problems in $ \F_{\tau,L,\B}$ and function comparison oracles with parameters $(\kappa,\mu,\delta_0 )$.}  The constants  $c_1,c_2,c_3$ depend the oracle and function class parameters, as well as the geometry of $\B$,  but are independent of $T$ and $n$. 
\end{theorem}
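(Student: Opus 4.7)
The plan is to prove the lower bound by reducing the optimization problem to a multi-way hypothesis test and applying Fano's inequality, in the style of Nemirovski-Yudin \cite{nemirovskyYudin} and Raginsky-Rakhlin \cite{raginskyConvexComplexity}. The three ingredients I need are (i) a packing of hard instances inside $\F_{\tau,L,\B}$, (ii) an upper bound on the mutual information between the index of the true instance and the algorithm's $T$-query transcript, and (iii) strong convexity to translate a lower bound on parameter estimation error into one on function value error.

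For the packing I would fix a small radius $R>0$ to be chosen later and let $V$ be a maximal $\delta$-packing of the ball $B_R$ of radius $R$ about some interior point of $\B$, so that $\log|V|\gtrsim n\log(R/\delta)$ by a volume argument. For each $u\in V$ define $f_u$ to equal $\tfrac{\tau}{2}\|x-u\|^2$ on $B_R$ and extend so that all the $f_u$ agree on $\B\setminus B_R$ and remain in $\F_{\tau,L,\B}$; each $f_u$ then has unique minimizer $u$. Drawing $u$ uniformly from $V$ and setting $\hat u$ to be the packing point closest to $\hat x_T$, Fano's inequality yields $\P(\hat u\neq u)\geq 1-(I(u;\Omega)+\log 2)/\log|V|$, with $\Omega$ the observed transcript. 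On $\{\hat u\neq u\}$ the packing property forces $\|\hat x_T-u\|\geq \delta/2$, so if the error probability is at least $1/2$ strong convexity gives $\E[f_u(\hat x_T)-f_u(u)]\geq \tau\delta^2/16$.

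To control the mutual information, the chain rule plus data processing give $I(u;\Omega)\leq T\cdot \sup_{x,y}\max_{u,u'\in V}\text{KL}(\text{Ber}(p_u(x,y))\,\|\,\text{Ber}(p_{u'}(x,y)))$, where $p_u(x,y)$ is the probability that $C_{f_u}(x,y)$ reports $y$ as smaller. Since both biases lie in $[\tfrac{1}{2}-\delta_0,\tfrac{1}{2}+\delta_0]$, the standard Bernoulli inequality yields $\text{KL}\leq c(p_u-p_{u'})^2$, and the oracle specification together with a short sign-case analysis gives $|p_u-p_{u'}|\leq 2\mu\max_{w\in\{u,u'\}}|f_w(y)-f_w(x)|^{\kappa-1}$. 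Queries with $x,y\notin B_R$ contribute zero by construction, while inside $B_R$ the simple estimate $|f_u(y)-f_u(x)|\leq c\tau R^2$ holds. Hence the per-query KL is at most $c'\mu^2\tau^{2(\kappa-1)}R^{4(\kappa-1)}$ when $\kappa>1$ and at most $c'\mu^2$ when $\kappa=1$.

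For $\kappa>1$, taking $\delta=R/4$ makes $\log|V|\gtrsim n$; balancing $T$ times the per-query KL against $n/4$ forces $R^2\asymp (n/T)^{1/(2(\kappa-1))}$, and strong convexity then yields $\E[f_u(\hat x_T)-f_u(u)]\gtrsim \tau R^2\asymp (n/T)^{1/(2(\kappa-1))}$. For $\kappa=1$ the per-query KL is $R$-independent, so Fano only demands $n\log(R/\delta)\gtrsim T\mu^2$, which is satisfied by $\delta=R\exp(-c_2 T/n)$ and yields error $\gtrsim c_1\exp(-2c_2 T/n)$; after absorbing constants this matches the claim. The main obstacle will be the per-query KL analysis for general $\kappa$: one must handle the sign cases and the $\delta_0$-cap uniformly over all $(x,y)$ and all $u,u'$, and design the extensions of the $f_u$ outside $B_R$ so that queries straddling $\partial B_R$ cannot leak information about $u$ (a clean fix is to make each $f_u$ depend on $u$ only through its restriction to $B_R$, using a $u$-independent smooth extension that preserves the strong convexity and Lipschitz-gradient constants of the class).
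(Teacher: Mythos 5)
Your proposal is correct and follows essentially the same route as the paper: a Fano/Tsybakov-type reduction to testing among a packing of shifted quadratics $\frac{\tau}{2}\|x-u\|^2$, a per-query KL bound of order $\mu^2(\tau R^2)^{2(\kappa-1)}$ (constant for $\kappa=1$) tensorized over the $T$ queries via the Markovian structure of the query sequence, and strong convexity plus a Markov-type argument to convert the testing error into an expected-suboptimality bound. The one technical obstacle you flag---constructing $u$-independent extensions of the $f_u$ outside $B_R$---is sidestepped in the paper by simply taking $\B$ itself to be the small ball (with minimizers at a Varshamov--Gilbert subset of the scaled hypercube vertices for $\kappa>1$ and at a uniform grid for $\kappa=1$), so no extension is needed.
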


{ For upper bounds we propose a specific algorithm based on coordinate-descent in Section~\ref{upper} and prove the following theorem for the case of unconstrained optimization, that is, $\B = \R^n$.}
\begin{theorem} \label{comparisonThmUp}
For every $f \in \F_{\tau,L,\B}$ { with $\B = \R^n$} let $C_f$ be a function comparison oracle with parameters $(\kappa,\mu,\delta_0)$. Then there exists a coordinate-descent algorithm that is adaptive to unknown $\kappa \geq 1$ that outputs an estimate $\widehat{x}_T$ after $T$  function comparison queries such that with probability $1-\delta$
\begin{align*}
  \sup_{f \in\F_{\tau,L,B}} \E \left[  f(\widehat{x}_T) - f(x_f^*) \right] \leq \begin{cases}   c_1\exp\left\{ -c_2  \sqrt{ \frac{T}{n }  }\right\}  & \text{ if } \kappa = 1 \vspace{.1in} \\ c_3  n\left( \frac{n }{T} \right)^{\frac{1}{ 2 (\kappa-1)}} & \text{ if } \kappa > 1 \end{cases}
\end{align*}
where $c_1,c_2,c_3$ depend the oracle and function class parameters as well as $T$,$n$, and $1/\delta$, but only poly-logarithmically.  
\end{theorem}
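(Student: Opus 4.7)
The plan is to build the estimator as coordinate descent driven by a noisy one-dimensional line-search subroutine, and to decompose the analysis into (i) an accuracy-versus-query tradeoff for the line search, and (ii) a per-sweep contraction bound for coordinate descent with inexact minimization along each coordinate. Because the statement requires adaptivity to unknown $\kappa$, the line-search subroutine must use a stopping rule that is phrased entirely in terms of empirical comparison frequencies, since neither the exponent $\kappa$ nor the gap $|f(x)-f(y)|$ is ever revealed to the algorithm.

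First I would specify the line-search subroutine. Along any coordinate direction the restriction of $f\in\F_{\tau,L,\B}$ is a 1D strongly convex function with Lipschitz gradient, so its minimizer can be located by a golden-section bracketing that replaces each exact comparison by a boosted comparison from $C_f$. Boosting a pair $(x,y)$ is done by repeated queries whose majority vote is certified via the Hoeffding interval $\bigl|\hat p - \tfrac{1}{2}\bigr| \geq \sqrt{\log(1/\delta')/m}$, stopping as soon as the sign is declared or a doubling budget is exhausted. Since the true bias is $\min\{\delta_0,\mu|f(x)-f(y)|^{\kappa-1}\}$, this rule uses $m\lesssim 1$ repetitions when $\kappa=1$ and $m\lesssim |f(x)-f(y)|^{-2(\kappa-1)}$ repetitions when $\kappa>1$. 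A standard analysis of the bracket recursion, combined with the strong-convexity/Lipschitz-gradient link between decision-space distance and function-value gap, then gives that driving the 1D optimality gap below $\epsilon$ requires $Q(\epsilon)\lesssim \log(1/\epsilon)$ queries when $\kappa=1$ and $Q(\epsilon)\lesssim \epsilon^{-2(\kappa-1)}$ queries when $\kappa>1$, each with high probability and without knowledge of $\kappa$.

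Next, the outer coordinate-descent loop. With exact per-coordinate minimization, strong convexity and Lipschitz gradient give the standard geometric contraction $\Delta_{k+1}\leq (1-\tau/L)\,\Delta_k$ after one full sweep, where $\Delta_k=\E[f(x^{(k)})-f(x_f^*)]$. With line-search error $\epsilon_k$ per coordinate, telescoping the per-step bound $f(x^+)-f(x)\leq -|\nabla_i f|^2/(2L)+\epsilon_k$ over the $n$ coordinates of a sweep yields $\Delta_{k+1}\leq (1-\tau/L)\Delta_k + c\,n\,\epsilon_k$, the factor $n$ arising from aggregating $n$ per-coordinate errors. Choosing $\epsilon_k = c'\,\Delta_k/n$ preserves a geometric contraction $\Delta_{k+1}\leq \rho\,\Delta_k$ with some $\rho<1$; this normalization is exactly what later produces the extra factor of $n$ in front of the upper bound for $\kappa>1$.

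Finally, summing the query budget over $K$ sweeps with $\epsilon_k\asymp \rho^k/n$ yields $T\asymp n\sum_{k\leq K} k \asymp nK^2$ when $\kappa=1$ and $T\asymp n\sum_{k\leq K}(n/\rho^k)^{2(\kappa-1)}\asymp n(n/\Delta_K)^{2(\kappa-1)}$ when $\kappa>1$. Inverting these relations gives $\Delta_K\lesssim \exp(-c\sqrt{T/n})$ and $\Delta_K\lesssim n(n/T)^{1/(2(\kappa-1))}$ respectively, matching the theorem up to constants and polylogarithmic factors; a union bound over the $nK$ line searches at per-subroutine failure level $\delta/(nK)$ delivers the $1-\delta$ high-probability guarantee with only logarithmic overhead. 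I expect the main obstacle to be the line-search analysis under unknown $\kappa$: the empirical stopping rule must be simultaneously aggressive enough to preserve $Q(\epsilon)\lesssim \log(1/\epsilon)$ in the $\kappa=1$ regime and conservative enough to avoid mis-ordering pairs with nearly equal function values in the $\kappa>1$ regime, while expressing confidence purely in terms of empirical frequencies since the oracle never exposes $|f(x)-f(y)|$ to the algorithm.
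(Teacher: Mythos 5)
Your overall architecture matches the paper's: an outer coordinate-descent loop whose per-iteration progress is controlled by strong convexity and Lipschitz gradients, an inner bracketing line search, boosted comparisons via repeated querying with an empirical (Hoeffding-type) stopping rule, and a union bound over all subroutine calls. Your bookkeeping of the query budget and the inversion to the stated rates is also essentially the paper's. However, there is a genuine gap at exactly the point you flag as ``the main obstacle'' and then leave unresolved: the behavior of the boosted comparison when the two probe points have nearly equal function values. When the bracket closes in on $\alpha^*$, a single comparison $C_f(x+d\alpha_k,\, x+d\,\tfrac{1}{2}(\alpha_k+\alpha_k^+))$ can have $|f(y)-f(x)|$ arbitrarily small (even exactly zero when the two points straddle the minimizer symmetrically), so for $\kappa>1$ the bias $\mu|f(y)-f(x)|^{\kappa-1}$ has no lower bound, your stopping rule need not terminate within any budget depending only on $\epsilon$, and your bound $Q(\epsilon)\lesssim\epsilon^{-2(\kappa-1)}$ is unjustified. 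The ``doubling budget is exhausted'' fallback does not repair this: if the budget runs out you must commit to some sign, and a wrong sign destroys the bracket invariant $\alpha^*\in[\alpha_k^-,\alpha_k^+]$ on which the whole recursion rests; you cannot distinguish ``the gap is truly negligible'' from ``I have not sampled enough'' using empirical frequencies alone.

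The paper resolves this with a specific device you would need to supply: instead of one comparison per bracketing step, it probes at three equally spaced points (comparing $\alpha_k$ against both $\tfrac{1}{3}(\alpha_k+\alpha_k^+)$ and $\tfrac{2}{3}(\alpha_k+\alpha_k^+)$, and symmetrically on the other side), runs the repeated-query subroutines in parallel, and acts on whichever terminates first. Strong convexity guarantees that while the bracket width exceeds $\eta$, at least one of these two pairs has function gap at least $\tfrac{\tau}{18}\eta^2$, so at least one parallel subroutine terminates after $\widetilde{O}\bigl(\mu^{-2}(\tau\eta^2)^{-2(\kappa-1)}\bigr)$ queries with a correct answer, and the bracket still shrinks by a constant factor ($1/3$ rather than $1/2$) per step. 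This multi-point trick is the one genuinely new ingredient of the upper bound and is what makes the procedure adaptive to unknown $\kappa$; without it your line-search query bound does not go through. Two smaller points: the paper fixes a single accuracy $\eta=\sqrt{\epsilon\tau/(4nL^2)}$ throughout rather than your per-sweep schedule $\epsilon_k\asymp\Delta_k/n$ (your schedule also works and gives the same totals up to logarithms), and the clean contraction you invoke is the one for \emph{randomized} coordinate selection, where $\E\bigl[\langle\nabla f(x_k),d_k\rangle^2\bigr]=\|\nabla f(x_k)\|^2/n$; a cyclic sweep does not telescope as directly as you assert.
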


\subsection{Query complexity of the function evaluation oracle}
\begin{theorem} \label{evaluationThm}
For every $f \in \F_{\tau,L,\B}$ let $E_f$ be a function evaluation oracle with variance $\sigma^2$. Then for $n \geq 8$ and sufficiently large $T$ 
\begin{align*}
\inf_{\widehat{x}_T} \sup_{f\in \F_{\tau,L,\B} } \E \left[ f(\widehat{x}_T) - f(x_f^*) \right] \geq c \left( \frac{n \sigma^2}{T} \right)^{\frac{1}{ 2 }} 
\end{align*}
{ where the infimum is taken with respect to the collection of all possible estimators of $x_f^*$ using just  $T$ queries to a function evaluation oracle and the supremum is taken with respect to all problems in $ \F_{\tau,L,\B}$ and function evaluation oracles with variance $\sigma^2$.} The constant $c$ depends on the oracle and function class parameters, as well as the geometry of $\B$, but is independent of $T$ and $n$. \end{theorem}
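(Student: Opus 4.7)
The plan is to follow the same multi-hypothesis-testing template that establishes Theorem~1, adapted from the comparison oracle to the evaluation oracle. First, I would construct a packing $\{f_\alpha : \alpha \in \{-1,+1\}^n\} \subset \F_{\tau,L,\B}$, for instance $f_\alpha(x) = \tfrac{\tau}{2}\|x\|^2 + \epsilon\langle \alpha, x\rangle$, whose minimizer $x_\alpha^* = -\epsilon\alpha/\tau$ lies on a scaled hypercube. Strong convexity with constant $\tau$ and Lipschitz gradient with constant $L \geq \tau$ then hold automatically, and for $\epsilon$ sufficiently small every $x_\alpha^*$ lies in $\B$. The task is then reduced to recovering $\alpha$ from the $T$ noisy observations produced by the evaluation oracle.

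Second, I would bound the information any adaptive sequence of $T$ queries can reveal. Because Gaussian noise minimizes Fisher information at a fixed second moment, it suffices to let $w \sim \mathcal{N}(0,\sigma^2)$ in the construction. The KL divergence between the joint law of $T$ observations under $f_\alpha$ versus $f_{\alpha'}$ is then $\sum_{t=1}^T (f_\alpha(x_t)-f_{\alpha'}(x_t))^2/(2\sigma^2)$. Specializing to sibling hypotheses $\alpha,\alpha'$ differing in a single coordinate $j$, the per-query contribution reduces to $2\epsilon^2 x_{t,j}^2/\sigma^2$, which the diameter of $\B$ controls.

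Third, I would apply Assouad's lemma with the coordinate-decomposable loss
\[
f_\alpha(\widehat x_T) - f_\alpha(x_\alpha^*) \;=\; \tfrac{\tau}{2}\sum_{j=1}^n\bigl(\widehat x_{T,j} + \epsilon\alpha_j/\tau\bigr)^2.
\]
Flipping a single $\alpha_j$ moves the minimizer by $2\epsilon/\tau$ in that coordinate, so the sum of per-coordinate losses under the two sibling hypotheses is at least $\epsilon^2/\tau$ for every choice of $\widehat x_{T,j}$. Assouad then lower bounds the minimax excess loss by a quantity of order $n\epsilon^2/\tau$ multiplied by $(1-\sqrt{\mathrm{KL}/2})$. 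Choosing $\epsilon$ so that the total KL is a small constant, and tracking the constants in $\tau$, $L$, and the geometry of $\B$, should yield the advertised $\sqrt{n\sigma^2/T}$ rate.

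The main obstacle is producing exactly the $\sqrt{n\sigma^2/T}$ scaling in $T$ rather than a spurious stronger or weaker rate. With the naive linear perturbation above, the per-query KL can be amplified by queries far from the origin, and care is required to handle how an adaptive adversary spreads queries across the $n$ coordinates versus concentrating them. Either the perturbation is localized inside a small ball of radius $r$ (coupling $\epsilon$ to $r$ through the strong-convexity and Lipschitz-gradient constraints and optimizing jointly), or one invokes the Gaussian-to-comparison equivalence remarked in Section~\ref{problemForm}, which puts the evaluation oracle into the $\kappa=2$ regime of Theorem~\ref{comparisonThm} and lets us inherit the same construction and rate. The remainder of the argument is routine bookkeeping in terms of $\tau$, $L$, and $\B$.
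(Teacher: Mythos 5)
Your overall template is sound and your construction is, after the dust settles, the same one the paper uses: completing the square in $f_\alpha(x)=\tfrac{\tau}{2}\|x\|^2+\epsilon\langle\alpha,x\rangle$ gives $\tfrac{\tau}{2}\|x-(-\epsilon\alpha/\tau)\|^2$ plus a constant, which is exactly the family of shifted quadratics in Lemma~\ref{baseFns}. Where you genuinely diverge is the combinatorial device: you run Assouad over the full hypercube $\{-1,+1\}^n$, exploiting the coordinate-decomposability of the quadratic excess loss, whereas the paper uses a Varshamov--Gilbert packing of the hypercube together with Tsybakov's Fano-type bound (Theorem~\ref{t-thm}) and then Markov's inequality. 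Both are legitimate; Assouad buys you a bound directly in expectation without the Markov step and avoids the packing lemma, while the Fano route is what lets the paper reuse one KL computation verbatim across Theorems~\ref{comparisonThm} and~\ref{evaluationThm}. Your reduction to Gaussian noise is also fine, not because Gaussians minimize Fisher information, but simply because the supremum in the theorem is over oracles with variance $\sigma^2$, so the adversary may choose the noise law.

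The real issue is that the step which actually produces the $\sqrt{n\sigma^2/T}$ rate is the one you defer. As you note, on a fixed domain the naive calculation gives per-sibling KL of order $T\epsilon^2 r^2/\sigma^2$ and separation $n\epsilon^2/\tau$, which yields only $n\sigma^2/(\tau T r^2)$ --- the wrong rate. The fix is forced: the minimizers $-\epsilon\alpha/\tau$ must lie in $\B$, so $r\gtrsim\epsilon\sqrt{n}/\tau$, and taking $r$ at this floor (i.e., shrinking $\B$ with $T$, as the paper does via its sequence $\B_T$) gives $\epsilon^2 r^2\asymp \epsilon^4 n/\tau^2$; setting the KL budget then yields $\epsilon^2\asymp\tau\sigma/\sqrt{nT}$ and hence $n\epsilon^2/\tau\asymp\sqrt{n\sigma^2/T}$. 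This is exactly the mechanism behind the paper's bound $|f_i(x)-f_j(x)|\le 2\tau n\epsilon^2$ (quadratic, not linear, in $\epsilon$), and it needs to be carried out rather than listed as one of two options. Your second proposed escape route --- inheriting the rate from the $\kappa=2$ case of Theorem~\ref{comparisonThm} via the evaluation-to-comparison equivalence --- does not work for a lower bound: an evaluation oracle can simulate a comparison oracle but not conversely, so algorithms with evaluation access are strictly more powerful than those covered by Theorem~\ref{comparisonThm}, and a lower bound against the weaker query model says nothing about the stronger one. That direction of the equivalence transfers upper bounds only, which is precisely why the paper proves Theorem~\ref{evaluationThm} with a separate, direct Gaussian KL computation.
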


{  
Because a function evaluation oracle can always be turned into a function comparison oracle (see discussion above), the algorithm and upper bound in Theorem~2 with $\kappa=2$ applies to many typical function evaluation oracles (e.g. additive Gaussian noise), yielding an upper bound of $  \left( {n^3 \sigma^2}/{T} \right)^{{1}/{ 2 }}$ ignoring constants and log factors. This matches the rate of convergence as a function of $T$ and $\sigma^2$, but has worse dependence on the dimension $n$.

Alternatively, under a less restrictive setting, Nemirovski and Yudin proposed two algorithms for the class of convex, Lipschitz functions that obtain rates of $n^{1/2} / T^{1/4}$ and $p(n) / T^{1/2}$, respectively,  where $p(n)$ was left as an unspecified polynomial of $n$   \cite{nemirovskyYudin}. While focusing on stochastic DFO with bandit feedback, {Agarwal \em{et. al.}} built on the ideas developed in  \cite{nemirovskyYudin} to obtain a result that they point out implies a convergence rate of $n^{16}/ T^{1/2}$ in the optimization setting considered here \cite{agarwalBanditDFO}. Whether or not these rates can be improved to those obtained under the more restrictive function classes of above is an open question. 
}

A related but fundamentally different problem that is somewhat related with the setting considered in this paper is described as online (or stochastic) convex optimization with multi-point feedback  \cite{agarwalMultiPointBandit,nesterov,ghadimi}. Essentially, this setting allows the algorithm to probe the value of the function $f$ plus noise at multiple locations where the noise changes at each time step, but each set of samples at each time experiences the {\em same} noise. Because the noise model of that work is incompatible with the one considered here, no comparisons should be made between the two.

\section{Lower Bounds} \label{lower}
The lower bounds in Theorems 1 and 3 are proved using a general minimax bound \cite[Thm.~2.5]{tsybakov}.  Our proofs are most related to the approach developed in \cite{castroNowak} for active learning, which like optimization involves a Markovian sampling process.   Roughly speaking, the lower bounds are established by considering a simple case of the optimization problem in which the global minimum is known a priori to belong to a finite set.  Since the simple case is ``easier'' than the original optimization, the minimum number of queries required for a desired level of accuracy in this case yields a lower bound for the original problem.  

The following theorem is used to prove the bounds.  In the terms of the theorem, $f$ is a function to be minimized and $P_f$ is the probability model governing the noise associated with queries when $f$ is the true function.
\begin{theorem} \cite[Thm.~2.5]{tsybakov}
Consider a class of functions $\F$ and an associated family of probability measures $\{P_f\}_{f\in\F}$. Let $M\geq 2$ be an integer and $f_0,f_1,\dots,f_M$ be functions in $\F$. Let $d( \cdot , \cdot ) : \F \times \F \rightarrow \R$ be a semi-distance and assume that:
\begin{compactenum}
\item $d(f_i,f_j) \geq 2 s > 0$, for all $0 \leq i<j \leq M$,
\item $\frac{1}{M} \sum_{j=1}^M \mbox{KL}(P_i || P_0 ) \leq a \log M$,
\end{compactenum}
where the Kullback-Leibler divergence $\mbox{KL}(P_i || P_0 ) := \int \log \frac{dP_i}{dP_0} dP_i$ is assumed to be well-defined
(i.e., $P_0$ is a dominating measure) and  $0 < a < 1/8$ . Then
\begin{align*}
\inf_{\widehat{f}} \sup_{f\in\F}\P ( d(\widehat{f},f) \geq s ) \ \geq \ \inf_{\widehat{f}} \max_{f \in \small\{ f_0, \dots, f_M\}}\P ( d(\widehat{f},f) \geq s ) \ \geq \ \textstyle\frac{\sqrt{M}}{1+\sqrt{M}} \left( 1- 2a - 2 \sqrt{\frac{a}{\log M}} \right) > 0 \, ,
\end{align*}
where the infimum is taken over all possible estimators based on a sample from $P_f$.
\label{t-thm}
\end{theorem}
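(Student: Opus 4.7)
The plan is to reduce the statistical problem to an $(M{+}1)$-ary hypothesis test among $f_0,\ldots,f_M$ and then to lower-bound the testing error by a likelihood-ratio (Fano-type) argument powered by the KL hypothesis.

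First I would pass from $\sup_{f\in\F}$ to a maximum over the finite subfamily $\{f_0,\ldots,f_M\}\subset\F$; this weakening gives the first inequality in the statement for free. I would then define the minimum-distance test $\psi^\star := \arg\min_{j\in\{0,\ldots,M\}} d(\widehat{f},f_j)$, breaking ties arbitrarily. Because $d$ is a semi-distance and $d(f_i,f_j)\geq 2s$ for $i\neq j$, the triangle inequality forces $\psi^\star = j$ whenever $d(\widehat{f},f_j) < s$. Contrapositively, under $P_j$,
\begin{align*}
\{\psi^\star \neq j\} \;\subseteq\; \{d(\widehat{f},f_j)\geq s\},
\end{align*}
so it suffices to prove the claimed lower bound on $\max_j P_j(\psi\neq j)$ over all tests $\psi$ taking values in $\{0,\ldots,M\}$.

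Second, I would establish the testing bound by a likelihood-ratio truncation. Writing $Z_j := dP_j/dP_0$ and fixing a threshold $\tau>0$, a one-line change-of-measure split gives, for each $j\geq 1$,
\begin{align*}
P_j(\psi = j) \;\leq\; \tau\, P_0(\psi = j) \;+\; P_j(Z_j > \tau).
\end{align*}
Averaging over $j\in\{1,\ldots,M\}$ and using the disjointness of the events $\{\psi=j\}$ under $P_0$ to get $\sum_{j=1}^M P_0(\psi=j)\leq 1$, the choice $\tau = \sqrt{M}$ produces the prefactor $\tfrac{\sqrt{M}}{1+\sqrt{M}}$ once $j=0$ is also included in the maximum. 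The tail probabilities $P_j(Z_j>\tau)$ are then controlled by Markov's inequality applied to $\log Z_j$ under $P_j$ (whose mean is $\mbox{KL}(P_j\|P_0)$); averaging across $j$, invoking the hypothesis $\frac{1}{M}\sum_j \mbox{KL}(P_j\|P_0)\leq a\log M$, and applying Cauchy--Schwarz converts a first-moment bound into the additive correction $2a + 2\sqrt{a/\log M}$.

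The main obstacle is extracting the constants in the exact stated form. Getting $1 - 2a - 2\sqrt{a/\log M}$ rather than a weaker $1 - O(a)$ requires that Cauchy--Schwarz act on $\sum_j\sqrt{\mbox{KL}(P_j\|P_0)}$ instead of on $\sum_j \mbox{KL}(P_j\|P_0)$ directly, and the truncation level $\tau=\sqrt{M}$ must be balanced against this correction so that no single $j$ dominates the average. The reduction to testing, the change-of-measure split, and the disjointness argument are all routine; the subtlety is entirely in the concentration step that produces the $\sqrt{a/\log M}$ term.
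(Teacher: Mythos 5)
The paper does not actually prove this statement---it is imported verbatim from Tsybakov (Thm.~2.5) as an external tool---so there is no in-paper argument to compare against; what you have written is a reconstruction of Tsybakov's own proof, and its architecture is the right one. The reduction to an $(M{+}1)$-ary test via the minimum-distance selector is correct (if $d(\widehat f,f_j)<s$ then the $2s$-separation and the triangle inequality make $j$ the unique minimizer, so $\{\psi^\star\neq j\}\subseteq\{d(\widehat f,f_j)\geq s\}$), the change-of-measure split $P_j(\psi=j)\leq \tau P_0(\psi=j)+P_j(Z_j>\tau)$ is the standard device, and the prefactor $\frac{\sqrt M}{1+\sqrt M}$ does come from taking $\tau=\sqrt M$ \emph{together with} the self-referential step $P_0(\psi\neq 0)\leq \max_j P_j(\psi\neq j)$, which you correctly flag as ``including $j=0$ in the maximum.''

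The one step that does not go through as written is ``Markov's inequality applied to $\log Z_j$ under $P_j$.'' The variable $\log Z_j$ is signed, so Markov cannot be applied to it directly, and the naive conclusion $P_j(\log Z_j\geq\tfrac12\log M)\leq 2\,\mbox{KL}(P_j\,\|\,P_0)/\log M$ is false in general. The missing ingredient is the decomposition $\E_{P_j}[(\log Z_j)_+]=\mbox{KL}(P_j\,\|\,P_0)+\E_{P_j}[(\log Z_j)_-]$ combined with the bound $\E_{P_j}[(\log Z_j)_-]\leq \|P_j-P_0\|_{TV}\leq\sqrt{\mbox{KL}(P_j\,\|\,P_0)/2}$ (via $\log u\leq u-1$ and Pinsker's inequality); Markov is then applied to the nonnegative part $(\log Z_j)_+$ only. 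This is precisely where the $\sqrt{\mbox{KL}(P_j\,\|\,P_0)}$ terms that you propose to handle by Cauchy--Schwarz originate; once they are in hand, averaging over $j$ gives $\frac1M\sum_j P_j(Z_j>\sqrt M)\leq 2a+\sqrt{2a/\log M}$, which implies the stated correction (the paper's $2\sqrt{a/\log M}$ is a weaker constant). With that lemma supplied, your argument closes.
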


{We are concerned with the functions in the class $\F:=\F_{\tau,L,\B}$. The volume of $\B$ will affect only constant factors in our bounds, so we will simply denote the class of functions by $\F$ and refer explicitly to $\B$ only when necessary. Let $x_f :=\arg \min_x f(x)$, for all $f\in\F$. The semi-distance we use is $d(f,g):=\|x_f-x_g||$, for all $f,g\in \F$. Note that each point in $\B$ can be specified by one of many $f\in\F$.  So the problem of selecting an $f$ is equivalent to selecting a point $x\in\B$.  Indeed, the semi-distance defines a collection of equivalence classes in $\F$ (i.e., all functions having a minimum at $x\in\B$ are equivalent).   For every $f\in \F$ we have $\inf_{g\in \F}f(x_g) = \inf_{x\in\B} f(x)$, which is a useful identity to keep in mind.}

We now construct the functions $f_0,f_1,\dots,f_M$ that will be used for our proofs. Let $\Omega = \{-1,1\}^n$ so that each $\omega \in \Omega$ is a vertex of the $d$-dimensional hypercube. {Let $\mathcal{V} \subset \Omega$ with cardinality $|\mathcal{V}|\geq 2^{n/8}$  such that  for all $\omega\neq\omega' \in \mathcal{V}$, we have $\rho(\omega,\omega') \geq n/8$ where $\rho(\cdot ,\cdot )$ is the Hamming distance.  It is known that such a set exists by the Varshamov-Gilbert bound \cite[Lemma~2.9]{tsybakov}.  Denote the elements of $\mathcal{V}$ by $\omega_0,\omega_1,\dots,\omega_M$. Next we state some elementary bounds on the functions that will be used in our analysis.  }
\begin{lemma} \label{baseFns}
For $\epsilon>0$ define the set $\B \subset \R^n$ to be the $\ell_\infty$ ball of radius $\epsilon$ and define the functions on $\B$: ${f}_i(x) := \frac{\tau}{2}||x-\epsilon \omega_i ||^2$, for $i=0,\dots,M$, $\omega_i \in \mathcal{V}$, and $x_i := \arg \min_x {f}_i(x) = \epsilon \omega_i$. Then for all $0 \leq i < j \leq M$ and $x \in \B$ the functions $f_i(x)$ satisfy
\begin{compactenum}
\item $f_i$ is strongly convex-$\tau$ with Lipschitz-$L$ gradients and $x_i \in \B$ 
\item $|| x_i - x_j|| \geq \epsilon \sqrt{ \frac{n}{2}} $
\item $\displaystyle|f_i(x)-f_j(x)| \leq  2 \tau n\epsilon^2 $ .
\end{compactenum}
\end{lemma}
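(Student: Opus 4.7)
The three claims are all direct calculations with quadratic functions, so the plan is to handle them in order with a short argument each.

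For claim~1, note that $f_i$ is a pure quadratic: its Hessian is $\tau I$, and its gradient is $\nabla f_i(x) = \tau(x - \epsilon \omega_i)$. Strong convexity with parameter $\tau$ follows from $\nabla^2 f_i = \tau I \succeq \tau I$, and the gradient is Lipschitz with constant $\tau$, hence Lipschitz with constant $L$ (we simply take $L\geq \tau$, since $\F_{\tau,L,\B}$ requires $L\geq \tau$). The minimizer is $x_i = \epsilon\omega_i$, and since $\omega_i \in \{-1,1\}^n$ we have $\|\epsilon \omega_i\|_\infty = \epsilon$, so $x_i \in \B$.

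For claim~2, I would compute
\begin{align*}
\|x_i - x_j\|^2 = \epsilon^2 \|\omega_i - \omega_j\|^2 = 4 \epsilon^2 \rho(\omega_i, \omega_j),
\end{align*}
because each coordinate where $\omega_i$ and $\omega_j$ disagree contributes $(\pm 2)^2 = 4$. By the Varshamov--Gilbert construction, $\rho(\omega_i, \omega_j) \geq n/8$ for all $i\neq j$, so $\|x_i - x_j\|^2 \geq \epsilon^2 n/2$, giving the claim.

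For claim~3, I would expand the squared norms and use that $\|\omega_i\|^2 = \|\omega_j\|^2 = n$, yielding
\begin{align*}
f_i(x) - f_j(x) = \frac{\tau}{2}\bigl(\|x-\epsilon\omega_i\|^2 - \|x-\epsilon\omega_j\|^2\bigr) = -\tau\epsilon \langle x, \omega_i - \omega_j\rangle.
\end{align*}
Then I would bound the inner product by H\"older: $|\langle x, \omega_i - \omega_j\rangle| \leq \|x\|_\infty \|\omega_i - \omega_j\|_1 \leq \epsilon \cdot 2n$, since $x\in\B$ means $\|x\|_\infty \leq \epsilon$ and each of the $n$ coordinates of $\omega_i - \omega_j$ has magnitude at most $2$. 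Combining gives $|f_i(x) - f_j(x)| \leq 2\tau n \epsilon^2$.

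None of the steps are delicate; the only place where one has to be a little careful is recognizing that the natural bound $\|\omega_i - \omega_j\|_1 = 2\rho(\omega_i, \omega_j)$ must be upper bounded by $2n$ (rather than a tighter function of the separation) since the lemma's third claim is a uniform upper bound needed later for a KL divergence computation, where a large range of $\rho$ values is being accommodated.
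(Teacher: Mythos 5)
Your proof is correct and complete; all three claims follow exactly as you compute them, and the paper itself states this lemma without proof precisely because these are the elementary quadratic calculations you carry out (Hessian $\tau I$ for claim 1, $\|\omega_i-\omega_j\|^2 = 4\rho(\omega_i,\omega_j) \geq n/2$ via Varshamov--Gilbert for claim 2, and cancellation of $\|\omega_i\|^2=\|\omega_j\|^2=n$ plus H\"older for claim 3). Nothing further is needed.
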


We are now ready to prove Theorems 1 and 3. Each proof uses the functions $f_0,\dots,f_M$ a bit differently, and since the noise model is also different in each case, the KL divergence is bounded differently in each proof. We use the fact that if $X$ and $Y$ are random variables distributed according to Bernoulli distributions $P_X$ and $P_Y$ with parameters $1/2+\mu$ and $1/2 - \mu$, then $\mbox{KL}(P_X||P_Y) \leq 4\mu^2/(1/2-\mu)$. Also, if $X\sim\mathcal{N}( \mu_X, \sigma^2 )=:P_X$ and $Y\sim\mathcal{N}( \mu_Y, \sigma^2 )=:P_y$ then  $\mbox{KL}(P_X||P_Y) = \frac{1}{2\sigma^2} || \mu_X - \mu_Y||^2$.

\subsection{Proof of Theorem \ref{comparisonThm}} \label{kappaG1}
First we will obtain the bound for the case  $\kappa>1$.
Let the comparison oracle satisfy
\begin{align*}
\P\left( C_{f_i}(x,y) = \sign\{f_i(y)-f_i(x)\}\right) \ = \ \frac{1}{2}+ \min\left\{  \mu |f_i(y)-f_i(x)|^{\kappa-1} , \delta_0\right\} .
\end{align*}
In words, $C_{f_i}(x,y)$ is correct with probability as large as the right-hand-side of above and is monotonic increasing in $f_i(y)-f_i(x)$.  Let $\{x_k,y_k\}_{k=1}^T$ be a sequence of $T$ pairs in $\B$ and let $\{C_{f_i}(x_k,y_k)\}_{k=1}^T$ be the corresponding sequence of noisy comparisons.  We allow the sequence $\{x_k,y_k\}_{k=1}^T$ to be generated in any way subject to the Markovian assumption that $C_{f_i}(x_k,y_k)$ given $(x_k,y_k)$ is conditionally independent of $\{x_i,y_i\}_{i<k}$.
For $i=0,\dots,M$, and $\ell=1,\dots,T$ let $P_{i,\ell}$ denote the joint probability distribution of $\{x_k,y_k,C_{f_i}(x_k,y_k)\}_{k=1}^\ell$, let $Q_{i,\ell}$ denote the conditional distribution of $C_{f_i}(x_\ell,y_\ell)$ given $(x_\ell,y_\ell)$, and let $S_{\ell}$ denote the conditional distribution of $(x_\ell,y_\ell)$ given $\{x_k,y_k,C_{f_i}(x_k,y_k)\}_{k=1}^{\ell-1}$.    Note that $S_{\ell}$ is only a function of the underlying optimization algorithm and does not depend on $i$.
\begin{align*}
\mbox{KL}(&P_{i,T}|| P_{j,T} ) = \E_{P_{i,T}} \left[ \log  \frac{P_{i,T}}{P_{j,T}} \right] = \E_{P_{i,T}} \left[ \log \frac{\prod_{\ell=1}^T Q_{i,\ell} S_{\ell}}{\prod_{\ell=1}^T Q_{j,\ell} S_{\ell}}\right] = \E_{P_{i,T}} \left[ \log \frac{\prod_{\ell=1}^T Q_{i,\ell} }{\prod_{\ell=1}^T Q_{j,\ell} }\right] \\
& = \sum_{\ell=1}^T \E_{P_{i,T}} \left[ \E_{P_{i,T}} \left[ \log \frac{ Q_{i,\ell} }{ Q_{j,\ell} } \bigg | \{x_k,y_k\}_{k=1}^T\right]\right]  \leq T \sup_{x_1,y_1\in \B} \E_{P_{i,1}} \left[ \E_{P_{i,1}} \left[ \log \frac{ Q_{i,1} }{ Q_{j,1} } \bigg | x_1,y_1\right]\right]
\end{align*}
By the second claim of Lemma~\ref{baseFns}, $  |f_i(x)-f_j(x)| \leq   2 \tau n  \epsilon^2 $, and therefore the bound above is less than or equal to the KL divergence between the Bernoulli distributions with parameters $\frac{1}{2}\pm \mu \left( 2 \tau n \epsilon^2  \right)^{(\kappa-1)} $, yielding the bound
\begin{align*}
\mbox{KL}(P_{i,T} | P_{j,T} )  \leq \frac{ 4 T \mu^2 \left( 2 \tau n  \epsilon^2 \right)^{2(\kappa-1)}}{1/2-\mu \left( 2 \tau n \epsilon^2   \right)^{(\kappa-1)} } \ \leq   16 T \mu^2 \left( 2 \tau n \epsilon^2   \right)^{2(\kappa-1)}  
\end{align*}
provided $\epsilon$ is sufficiently small. We also assume $\epsilon$ (or, equivalently, $\B$) is sufficiently small so that $|f_i(x)-f_j(x)|^{\kappa-1} \leq \delta_0$. We are now ready to apply Theorem~\ref{t-thm}. Recalling that $M \geq 2^{n/8}$, we want to choose $\epsilon$ such that 
\begin{align*}
\mbox{KL}(P_{i,T} | P_{j,T} )  \leq 16 T \mu^2 \left( 2 \tau n  \epsilon^2  \right)^{2(\kappa-1)}  \leq a \frac{n}{8} \log(2) \leq a \log M
\end{align*}
with an $a$ small enough so that we can apply the theorem. By setting $a=1/16$ and equating the two sides of the equation we have $\epsilon = \epsilon_T := \frac{1}{2 \sqrt{n}} \left(\frac{2}{\tau}\right)^{1/2}  \left( \frac{n \log(2)}{2048 \mu^2 T} \right)^{\frac{1}{4(\kappa-1)}}$ (note that this also implies a sequence of sets $\B_T$ by the definition of the functions in Lemma~\ref{baseFns}). Thus, the semi-distance satisfies
\begin{align*}
d(f_j,f_i) = ||x_j - x_i || \geq \sqrt{n/2} \epsilon_T \geq  \frac{1}{2 \sqrt{2}} \left(\frac{2}{\tau}\right)^{1/2}  \left( \frac{n \log(2)}{2048 \mu^2 T} \right)^{\frac{1}{4(\kappa-1)}} \ =: 2 s_T  \ .
\end{align*}
Applying Theorem~\ref{t-thm}  we  have
\begin{align*}
\inf_{\widehat{f}} \sup_{f\in\F}\P ( \|x_{\widehat f}-x_f\| \geq  s_T )   & \geq \inf_{\widehat{f}} \max_{i\in\{0,\dots,M\}} \P ( \|x_{\widehat f}-x_i\| \geq  s_T ) 
=\inf_{\widehat{f}} \max_{i\in\{0,\dots,M\}} \P ( d(\widehat{f},f_i) \geq  s_T ) \\
&\geq \ \textstyle\frac{\sqrt{M}}{1+\sqrt{M}} \left( 1- 2a - 2 \textstyle\sqrt{\frac{a}{\log M}} \right) > 1/7 \, ,
\end{align*}
where the final inequality holds since $M\geq 2$ and $a=1/16$.  Strong convexity implies that $f(x)-f(x_f) \geq \frac{\tau}{2}||{x}-x_f||^2$ for all $f\in \F$ and $x \in \B$. Therefore 
\begin{align*}
\inf_{\widehat{f}} \sup_{f\in\F} \P \left( f(x_{\widehat{f}})-f(x_f)  \geq \frac{\tau}{2}s_T^2\right)
&\geq   \inf_{\widehat{f}} \max_{i \in \{ 0, \dots, M\}}\P \left( f_i(x_{\widehat f}) - f_i(x_i) \geq  \frac{\tau}{2}s_T^2 \right)\\
&\geq \inf_{\widehat{f}} \max_{i\in\{0,\dots,M\}}\P \left( \frac{\tau}{2}\|x_{\widehat{f}}-x_i\|^2 \geq  \frac{\tau}{2}s_T^2 \right) \\
&= \inf_{\widehat{f}} \max_{i\in\{0,\dots,M\}}\P \left( \|x_{\widehat{f}}-x_i\| \geq  s_T \right) > 1/7 \, .
\end{align*}
Finally, applying Markov's inequality we have
\begin{align*}
\inf_{\widehat{f}} \sup_{f\in\F} \E \left[  f(x_{\widehat f}) - f(x_f) \right]  \geq \frac{1}{7} \left( \frac{1}{32}\right) \left( \frac{n \log(2)}{2048 \mu^2 T} .\right)^{\frac{1}{2(\kappa-1)}}\end{align*}

\subsection{Proof of Theorem \ref{comparisonThm} for $\kappa=1$} \label{kappaE1}
{To handle the case when $\kappa=1$ we use functions of the same form, but the construction is slightly different. Let $\ell$ be a positive integer and let $M=\ell^n$. Let $\{\xi_i\}_{i=1}^M$ be a set
of uniformly space points in $\B$ which we define to be the unit cube in $\R^n$, so that $\|\xi_i-\xi_j\|\geq\ell^{-1}$ for all $i\neq j$.  Define $f_i(x) := \frac{\tau}{2}||x- \xi_i ||^2$, $i=1,\dots,M$. Let $s:=\frac{1}{2\ell}$ so that $d(f_i,f_j) := ||x_i^* - x_j^*|| \geq 2 s$. 
Because $\kappa=1$, we have $\P\left( C_{f_i}(x,y) = \sign\{f_i(y)-f_i(x)\}\right)  \geq \mu$ for some $\mu>0$, all $i\in\{1,\dots,M\}$, and all $x,y\in \B$. We bound $\mbox{KL}(P_{i,T} || P_{j,T} )$ in exactly the same way as we bounded it in Section~\ref{kappaG1} except that now we have ${C_{f_i} (x_k,y_k)} \sim \text{Bernoulli}( \frac{1}{2}+\mu )$ and ${C_{f_j}(x_k,y_k)} \sim \text{Bernoulli}( \frac{1}{2}-\mu )$. It then follows that if we wish to apply the theorem, we want to choose $s$ so that
\begin{align*}
\mbox{KL}(P_{i,T} | P_{j,T} )  \leq 2 T \mu^2 / (1/2-\mu) \leq a\log M = a n \log\left( \textstyle\frac{1}{2 s}\right)
\end{align*}
for some $a < 1/8$. Using the same sequence of steps as in Section~\ref{kappaG1} we have
\begin{align*}
\inf_{\widehat{f}} \sup_{f\in\F} \E \left[ f(x_{\widehat f}) - f(x_f) \right]  \geq \frac{1}{7} \frac{\tau}{2} \left( \frac{1}{2} \right)^2 \exp\left\{ -\frac{128T \mu^2}{n(1/2-\mu)} \right\} .
\end{align*}
}

\subsection{Proof of Theorem \ref{evaluationThm}} \label{evaluationThmProof}
{Let $f_i$ for all $i=0,\dots,M$ be the functions considered in Lemma~\ref{baseFns}.
Recall that the evaluation oracle is defined to be $E_f(x):=f(x)+w$, where $w$ is a random variable (independent of all other random variables under consideration) with $\E[w]=0$ and $\E[w^2]=\sigma^2>0$. 
Let $\{x_k\}_{k=1}^n$ be a sequence of points in $\B \subset \R^n$ and let $\{E_f(x_k)\}_{k=1}^T$ denote the corresponding sequence of noisy evaluations of $f\in\F$. For $\ell=1,\dots,T$ let $P_{i,\ell}$ denote the joint probability distribution of $\{x_k,E_{f_i}(x_k)\}_{k=1}^\ell$, let $Q_{i,\ell}$ denote the conditional distribution of $E_{f_i}(x_k)$ given $x_k$, and let $S_\ell$ denote the conditional distribution of $x_\ell$ given $\{x_k,E_f(x_k)\}_{k=1}^{\ell-1}$.  $S_\ell$ is a function of the underlying optimization algorithm and does not depend on $i$.
We can now bound the KL divergence between any two hypotheses as in Section~\ref{kappaG1}:
\begin{align*}
\mbox{KL}(P_{i,T} || P_{j,T} ) &\leq T \sup_{x_1\in \B} \E_{P_{i,1}} \left[ \E_{P_{i,1}} \left[ \log \frac{ Q_{i,1} }{ Q_{j,1} } \bigg | x_1\right]\right] \ .
\end{align*}
To compute a bound, let us assume that $w$ is Gaussian distributed.  Then
\begin{align*}
\mbox{KL}(P_{i,T} || P_{j,T} ) &\leq T \sup_{z\in\B} \mbox{KL}\left( \mathcal{N}(f_i(z),\sigma^2) ||\mathcal{N}(f_j(z),\sigma^2) \right) \\
&= \frac{T}{2\sigma^2}   \sup_{z\in\B} |f_i(z)-f_j(z)|^2 \leq \frac{T}{2\sigma^2}   \left( 2 \tau n  \epsilon^2 \right)^{2}
\end{align*}
by the third claim of Lemma~\ref{baseFns}.  We then repeat the same procedure as in Section~\ref{kappaG1} to attain 
\begin{align*}
\inf_{\widehat{f}} \sup_{f\in\F} \E \left[  f(x_{\widehat f}) - f(x_f) \right]  \geq \frac{1}{7} \left( \frac{1}{32}\right)  \left( \frac{n \sigma^2 \log(2)}{64 T} \right)^{\frac{1}{2}}.
\end{align*}}

\section{Upper bounds} \label{upper}
The algorithm that achieves the upper bound using a pairwise comparison oracle  is a combination of standard techniques and methods from the convex optimization and statistical learning literature. The algorithm is explained in full detail in Appendix~\ref{suppMaterials}, and is summarized as follows. At each iteration the algorithm picks a coordinate uniformly at random from the $n$ possible dimensions and  then performs an approximate line search. By exploiting the fact that the function is strongly convex  with  Lipschitz gradients, one guarantees using standard arguments that the approximate line search makes a sufficient decrease in the objective function value in expectation \cite[Ch.9.3]{boyd}.  If the pairwise comparison oracle made no errors then the approximate line search is accomplished by a binary-search-like scheme, essentially a golden section line-search algorithm \cite{brent}. { However, when responses from the oracle are only probably correct we make the line-search robust to errors by repeating the same query until we can be confident about the true, uncorrupted direction of the pairwise comparison using a standard procedure from the active learning literature \cite{matti} (a similar technique was also implemented for the bandit setting of derivate-free optimization \cite{agarwal2011stochastic}).} Because the analysis of each component is either known or elementary, we only sketch the proof here and leave the details to the supplementary materials. 

\subsection{Coordinate descent}

Given a candidate solution $x_k$ after $k\geq0$ iterations, the algorithm defines a search direction $d_k = \mathbf{e}_i$ where $i$ is chosen uniformly at random from the possible $n$ dimensions and $\mathbf{e}_i$ is a vector of all zeros except for a one in the $i$th coordinate. We note that while we only analyze the case where the search direction $d_k$ is a coordinate direction, an analysis with the same result can be obtained with $d_k$ chosen uniformly from the unit sphere.  Given $d_k$, a line search is then performed to find an $\alpha_k \in \R$ such that $f(x_{k+1}) - f(x_k)$ is sufficiently small where $x_{k+1} = x_k + \alpha_k d_k$. In fact, as we will see in the next section, for some input parameter $\eta >0$, the line search is guaranteed to return an $\alpha_k$ such that $|\alpha_k-\alpha^*| \leq \eta$ where $\alpha_* = \min_{\alpha \in \R} f(x_k+d_k \alpha^*)$. Using the fact that the gradients of $f$ are Lipschitz $(L)$ we have
\begin{align*}
 f( x_k + \alpha_k d_k) -  f( x_k + \alpha^* d_k) \leq \frac{L}{2} || (\alpha_k - \alpha^*) d_k||^2 =   \frac{L}{2} | \alpha_k - \alpha^*|^2 \leq \frac{L}{2} \eta^2.
\end{align*}
If we define $\hat{\alpha_k} = -  \frac{ \langle \nabla f(x_k), d_k \rangle }{L} $ then we have
\begin{align*}
f( x_k + \alpha_k d_k) - f( x_k) &\leq f( x_k + \alpha^* d_k)  - f( x_k) + \frac{L}{2} \eta^2\\
&\leq     f( x_k + \hat{\alpha}_k d_k)  - f( x_k) + \frac{L}{2} \eta^2 \leq -   \frac{ \langle \nabla f(x_k), d_k \rangle^2 }{2L} + \frac{L}{2} \eta^2
\end{align*} 
where the last line follows from applying the fact that the gradients are Lipschitz $(L)$. Arranging the bound and taking the expectation with respect to $d_k$ we get
\begin{align*}
\E\left[ f(x_{k+1})-f(x^*) \right] -\textstyle\frac{L}{2} \eta^2 &\leq \E \left[ f(x_k)-f(x^*) \right]-  \textstyle\frac{\E \left[ || \nabla f(x_k)||^2\right]  }{2nL} \leq  \E \left[ f(x_k)-f(x^*) \right] \left( 1- \frac{ \tau}{4nL} \right) 
\end{align*}
where the second inequality follows from the fact that $f$ is strongly convex $(\tau)$. If we define $\rho_k := \E \left[ f(x_k) - f(x^*) \right]$ then we equivalently have
\begin{align*}
\rho_{k+1} -  \frac{2nL^2 \eta^2}{\tau} \leq \left( 1- \frac{\tau}{4nL} \right) \left(\rho_k -  \frac{2nL^2 \eta^2}{\tau} \right)  \leq \left( 1- \frac{\tau}{4nL} \right)^k \left(\rho_0 - \frac{2nL^2 \eta^2}{\tau}\right)
\end{align*}
which leads to the following result.

\begin{theorem} \label{coordDescentThm}
Let $f\in\F_{\tau,L,\B}$ with $\B = \R^n$. For any $\eta>0$ assume the line search returns an $\alpha_k$ that is within $\eta$ of the optimal after at most $T_\ell(\eta)$ queries from the pairwise comparison oracle. If $x_K$ is an estimate of $x^* = \arg \min_x f(x)$ after requesting no more than $K$ pairwise comparisons, then    
\begin{align*}
\sup_f \E[ f(x_{K})-f(x_*) ] \leq \frac{4nL^2 \eta^2}{\tau}  \hspace{.3in} \text{ whenever } \hspace{.3in} K \geq \frac{4n L}{ \tau} \log\left(  \frac{f(x_0)-f(x^*)}{\eta^2 2 n L^2 / \tau}   \right) T_\ell(\eta)
\end{align*}
where  the expectation is with respect to the random choice of $d_k$ at each iteration.
\end{theorem}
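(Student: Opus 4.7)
The plan is to take the one-step recursion on $\rho_k := \E[f(x_k) - f(x^*)]$ that was derived in the lines immediately preceding the theorem, namely
\[
\rho_{k+1} - r \;\leq\; \left(1 - \frac{\tau}{4nL}\right)(\rho_k - r), \qquad r := \frac{2nL^2\eta^2}{\tau},
\]
and convert it into the stated query bound. Unrolling gives $\rho_k - r \leq (1-\tau/(4nL))^k (\rho_0 - r)$, and then the elementary inequality $(1-x)^k \leq e^{-kx}$ (valid for $x \in [0,1]$, which holds since $\tau \leq L$ and so $\tau/(4nL) \leq 1$) yields $\rho_k \leq r + e^{-k\tau/(4nL)}(\rho_0 - r)$.

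Requiring $\rho_k \leq 2r = 4nL^2\eta^2/\tau$ then reduces to driving the exponential term below $r$, which occurs as soon as $k \geq \frac{4nL}{\tau}\log((\rho_0 - r)/r)$. Upper-bounding $\rho_0 - r \leq \rho_0 = f(x_0) - f(x^*)$ (which only loosens the sufficient condition) produces exactly the quantity inside the logarithm in the theorem statement, since $r = 2nL^2\eta^2/\tau$ matches the denominator there. The $\rho_0 < r$ edge case is harmless: the logarithm becomes negative, but then $\rho_0 \leq r \leq 2r$ already, so the conclusion holds trivially at $k=0$.

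Finally, I would convert iteration count into query count by invoking the hypothesis of the theorem directly: each iteration of coordinate descent performs exactly one line search, and by assumption each line search expends at most $T_\ell(\eta)$ pairwise comparison queries. Therefore a total budget of $K \geq k \cdot T_\ell(\eta)$ comparisons suffices, with $k$ the iteration count above, which is precisely the claimed bound. Taking the supremum over $f \in \F_{\tau,L,\B}$ is automatic because every step of the recursion (the Lipschitz-gradient upper bound, the strong-convexity lower bound, the line-search tolerance, and the $1/n$ factor from averaging over uniform $d_k$) was derived uniformly over the class.

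There is essentially no real obstacle here: the substantive per-iteration work has already been carried out above the statement, and what remains is routine algebra on a geometric recursion together with bookkeeping of the line-search budget. The one point worth stating explicitly in a full write-up is that the expectation in the recursion is over both the random coordinate $d_k$ at step $k$ and (implicitly) all prior randomness, which follows from the tower property since the line-search tolerance $\eta$ is a deterministic guarantee per the theorem's hypothesis.
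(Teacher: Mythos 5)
Your proposal is correct and follows essentially the same route as the paper: the paper's proof derives exactly the recursion $\rho_{k+1} - \frac{2nL^2\eta^2}{\tau} \leq \left(1-\frac{\tau}{4nL}\right)\left(\rho_k - \frac{2nL^2\eta^2}{\tau}\right)$ via Taylor's theorem, the line-search tolerance, iterated expectation over $d_k$, and strong convexity, then unrolls it and stops, leaving the final conversion to the stated iteration/query count implicit. Your completion of that last step (the $(1-x)^k \leq e^{-kx}$ bound, the threshold $k \geq \frac{4nL}{\tau}\log(\rho_0/r)$, and multiplying by $T_\ell(\eta)$ queries per line search) is the intended and correct bookkeeping.
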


This implies that if we wish $\sup_f \E[ f(x_{K})-f(x_*) ]  \leq \epsilon$ it suffices to take $\eta = \sqrt{\frac{\epsilon \tau}{4nL^2}}$ so that at most $\frac{4n L}{ \tau} \log\left(  \frac{f(x_0)-f(x^*)}{\epsilon/2}   \right) T_\ell\left(\sqrt{\frac{\epsilon \tau}{4nL^2}}\right)$ pairwise comparisons are requested. 

\subsection{Line search}
This section is concerned with minimizing a function $f(x_k + \alpha_k d_k)$ over some $\alpha_k \in \R$. In particular, we wish to find an $\alpha_k \in \R$ such that $|\alpha_k-\alpha^*| \leq \eta$ where $\alpha_* = \min_{\alpha \in \R} f(x_k+d_k \alpha^*)$. First assume that the function comparison oracle makes no errors. The line search operates by maintaining a pair of boundary points $\alpha^+$, $\alpha^-$ such that if at some iterate we have $\alpha^* \in [\alpha^-, \alpha^+]$ then at the next iterate, we are guaranteed that $\alpha^*$ is still contained inside the boundary points but $|\alpha^+ -\alpha^-| \leftarrow \frac{1}{2} |\alpha^+ -\alpha^-|$. An initial set of boundary points $\alpha^+>0$ and $\alpha^-<0$ are found using simple binary search. Thus, regardless of how far away or close  $\alpha^*$ is, we converge to it exponentially fast. Exploiting the fact that $f$ is strongly convex $(\tau)$ with Lipschitz $(L)$ gradients we can bound how far away or close $\alpha^*$ is from our initial iterate. 

\begin{theorem} \label{noiselessThm}
Let $f\in\F_{\tau,L,\B}$ with $\B = \R^n$ and let $C_f$ be a function comparison oracle that makes no errors. Let $x \in \R^n$ be an initial position and let $d \in \R^n$ be a search direction with $||d||=1$. If $\alpha_K$ is an estimate of $\alpha^* = \arg \min_\alpha f(x+d \alpha)$ that is output from the line search after requesting no more than  $K$ pairwise comparisons, then for any $\eta >0$ 
\begin{align*}
|\alpha_K - \alpha^*| \leq \eta \hspace{.3in} \text{ whenever } \hspace{.3in} K \geq 2\log_2\left( \frac{ 256 L \left( f(x)- f(x+ d \, \alpha^*) \right)}{\tau^2 \eta^2} \right).\end{align*}
\end{theorem}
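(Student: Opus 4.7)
The plan is to reduce the problem to a one-dimensional search for the minimizer of $g(\alpha) := f(x + \alpha d)$. Since $\|d\| = 1$, $g$ inherits strong convexity with constant $\tau$ and has an $L$-Lipschitz derivative, so it is unimodal with a unique minimizer $\alpha^*$. The noiseless comparison oracle, applied at $x + \alpha_1 d$ and $x + \alpha_2 d$ with $\alpha_1 < \alpha_2$, returns $\mathrm{sign}(g(\alpha_2) - g(\alpha_1))$ exactly; unimodality then turns this sign into a half-line certificate: if $g(\alpha_2) > g(\alpha_1)$ then $\alpha^* < \alpha_2$, and if $g(\alpha_2) < g(\alpha_1)$ then $\alpha^* > \alpha_1$. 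This single primitive is all the line search uses.

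I would organize the argument into two phases. In the bracketing phase, first decide the sign of $\alpha^*$ with $O(1)$ comparisons around zero, then double on the correct side: for $k = 1, 2, \ldots$ query $g(2^k s_0)$ against $g(2^{k-1} s_0)$ (where $s_0$ is a calibrated initial step) until the sign reverses, at which point the primitive yields a bracket $[\alpha^-, \alpha^+] \ni \alpha^*$ whose length is a small constant multiple of $|\alpha^*|$. In the shrinking phase I would run a golden-section-style loop: at each iteration two comparisons at well-chosen interior test points, decoded via the primitive, shrink the bracket by at least a factor of $1/2$. After $m$ halvings the bracket has width $2^{-m}(\alpha^+ - \alpha^-)$, and returning its midpoint guarantees $|\alpha_K - \alpha^*| \leq 2^{-m-1}(\alpha^+ - \alpha^-)$.

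To size things, strong convexity gives $|\alpha^*| \leq \sqrt{2(f(x) - f(x + d\alpha^*))/\tau}$, and $L$-smoothness of $g'$ gives the matching lower bound $|\alpha^*| \geq \sqrt{2(f(x) - f(x + d\alpha^*))/L}$. These two bounds jointly control both the number of doublings needed to reach $|\alpha^*|$ and the number of halvings needed to bring the bracket below $\eta$, each of order $\log_2(|\alpha^*|/\eta)$. Summing the two phases at two comparisons per halving gives a total budget of the form $K \leq 2\log_2(C\, L(f(x) - f(x + d\alpha^*))/(\tau^2 \eta^2))$ with an explicit constant $C \leq 256$, which is exactly the advertised bound.

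The main subtlety will be calibrating the initial step $s_0$ so that the bracketing phase terminates within the advertised logarithmic budget uniformly in $\alpha^*$: if $|\alpha^*|$ is either much smaller or much larger than $s_0$, the doubling must still finish in $O(\log(|\alpha^*|/\eta))$ queries. This calibration (together with the smoothness lower bound on $|\alpha^*|$) is precisely where the factor $L/\tau$ slips inside the logarithm in the final bound; the remainder of the argument is convexity on $g$ plus geometric-series bookkeeping.
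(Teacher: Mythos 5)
Your proposal follows essentially the same route as the paper's proof: a doubling phase to bracket $\alpha^*$ (with the bracket length controlled via strong convexity, and the case of small $|\alpha^*|$ handled via the Lipschitz-gradient bound, which is exactly where the $L/\tau$ enters the logarithm), followed by a bisection phase using two comparisons per halving, summed to give the stated logarithmic budget. The paper resolves your "calibration of $s_0$" subtlety by simply fixing the initial step to $1$ and splitting the analysis into the cases $|\alpha^*|\geq 1$ and $|\alpha^*|\leq 1$; otherwise the arguments coincide.
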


\subsection{Making the line search robust to errors}
Now assume that the responses from the pairwise comparison oracle are only probably correct in accordance with the model introduced above. Essentially, the robust procedure runs the line search as if the oracle made no errors except that each time a comparison is needed,  the oracle is repeatedly queried until we can be confident about the true direction of the comparison. This strategy applied to active learning is well known because of its simplicity and its ability to adapt to unknown noise conditions \cite{matti}. However, we mention that when used in this way, this sampling procedure is known to be sub-optimal so in practice, one may want to implement a more efficient approach like that of \cite{castroNowak}. Nevertheless, we have the following lemma.
\begin{lemma} \cite{matti} \label{lemmaRepeat}
For any $x,y \in \B$ with $\P \left( C_f(x,y)= \sign\{f(y)-f(x)\} \right) = p$,  with probability at least $1-\delta$ the coin-tossing algorithm of \cite{matti} correctly identifies the sign of $\E\left[ C_f(x,y) \right]$ and requests no more than $\frac{\log(2 /\delta)}{4|1/2-p|^2} \log_2 \left( \frac{ \log(2 /\delta)}{4|1/2-p|^2}  \right) $ pairwise comparisons.
 \end{lemma}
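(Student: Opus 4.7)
The plan is to instantiate a standard sequential testing scheme for biased coin-flips and analyze it via Hoeffding's inequality together with a union bound over time. Let $X_1,X_2,\ldots$ be i.i.d.\ copies of $C_f(x,y)\in\{-1,+1\}$, so that $\E[X_t]=2p-1$ and $\sign \E[X_t]$ is exactly what we want to report. Define the empirical mean $\bar X_t = \frac{1}{t}\sum_{s=1}^t X_s$ and a shrinking confidence radius $c_t(\delta)$ chosen so that $\sum_{t\ge 1} 2\exp(-2 t\, c_t(\delta)^2) \le \delta$; the canonical choice is $c_t(\delta)=\sqrt{\log(\pi^2 t^2/(3\delta))/(2t)}$, but any $c_t(\delta)=\Theta\!\bigl(\sqrt{\log(t/\delta)/t}\bigr)$ works. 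The coin-tossing algorithm draws one additional sample per round and stops at the first $t$ for which $|\bar X_t|>c_t(\delta)$, reporting $\sign(\bar X_t)$.

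First I would bound the error probability. By Hoeffding's inequality, for each fixed $t$, $\P(|\bar X_t-\E[X_1]|\ge c_t(\delta))\le 2\exp(-2t\,c_t(\delta)^2)$. A union bound over $t$ together with the definition of $c_t(\delta)$ yields $\P(\exists t:\,|\bar X_t-\E[X_1]|\ge c_t(\delta))\le \delta$. On the complementary ``good event,'' whenever the stopping criterion fires we have $\sign(\bar X_t)=\sign(\E[X_1])=\sign\{f(y)-f(x)\}$, establishing the claimed $1-\delta$ correctness guarantee.

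Next I would bound the stopping time on this good event. There, $|\bar X_t|\ge |\E[X_1]| - c_t(\delta) = 2|1/2-p| - c_t(\delta)$, so the algorithm stops as soon as $c_t(\delta)<|1/2-p|$; up to constants, this requires $t \gtrsim \log(t/\delta)/|1/2-p|^2$. Writing $A:=\log(2/\delta)/(4|1/2-p|^2)$, the implicit inequality $t\ge A\log_2 t + O(A)$ is solved by $t = A\log_2 A$ (up to lower-order terms), which is exactly the stated bound.

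The main obstacle is the self-referential logarithm in the stopping criterion: because the uniform-in-$t$ concentration bound necessarily includes a $\log t$ penalty inside $c_t$, the stopping time satisfies an implicit inequality rather than a closed form, and one must invert $t\ge C\log(t)/|1/2-p|^2$ to recover an explicit bound. This is resolved either by running the union bound over a geometric (doubling) grid of sample sizes, as in \cite{matti}, or by directly solving $t=A\log_2 A$; both routes produce precisely the extra $\log_2(\cdot)$ factor appearing in the lemma, and neither requires any structure from $f$ beyond the assumed oracle guarantee. The remaining verification is bookkeeping of absolute constants to match the form $\frac{\log(2/\delta)}{4|1/2-p|^2}\log_2\!\bigl(\frac{\log(2/\delta)}{4|1/2-p|^2}\bigr)$.
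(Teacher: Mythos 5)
Your proposal is correct and follows essentially the same route as the source this lemma is quoted from: the paper does not reprove the result (it cites \cite{matti}), and the repeated-querying subroutine of Figure~\ref{fig:repeatedQueries} is precisely the geometric-doubling instantiation of the Hoeffding-plus-union-bound scheme you describe, with the stopping rule $|\tfrac12\sum_{e_i\in S}e_i|>\Delta_l$ playing the role of your anytime confidence test. The only caveat is that the exact constants $\frac{\log(2/\delta)}{4|1/2-p|^2}\log_2(\cdot)$ arise from union-bounding over the $O(\log t)$ doubling rounds rather than over every $t$ (your per-sample confidence sequence loses a constant factor), which you correctly identify and defer to bookkeeping.
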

It would be convenient if we could simply apply the result of Lemma~\ref{lemmaRepeat} to our line search procedure. Unfortunately, if we do this there is no guarantee that $|f(y)- f(x)|$ is bounded below so for the case when $\kappa >1$,  it would be impossible to lower bound $|1/2-p|$ in the lemma. To account for this, we will sample at multiple locations per iteration as opposed to just two in the noiseless algorithm to ensure that we can always lower bound $|1/2-p|$. Intuitively, strong convexity ensures that $f$ cannot be arbitrarily flat so for any three equally spaced points $x,y,z$ on the line $d_k$, if $f(x)$ is equal to $f(y)$, then it follows that the absolute difference between $f(x)$ and $f(z)$ must be bounded away from zero. Applying this idea and union bounding over the total number of times one must call the coin-tossing algorithm, one finds that with probability at least $1-\delta$, the total number of calls to the pairwise comparison oracle over the course of the whole algorithm does not exceed $\widetilde{O}\left( \frac{ nL}{\tau}  \left(\frac{n}{\epsilon}\right)^{2(\kappa-1)} \log^2\left(  \frac{ f(x_0)-f(x^*)  }{\epsilon}   \right)  \log(n/\delta)  \right).$ By finding a $T>0$ that satisfies this bound for any $\epsilon$ we see that this is equivalent to a rate of ${O} \left( n \log(n/\delta) \left( \frac{n}{T} \right)^{\frac{1}{2(\kappa-1)}} \right)$ for $\kappa>1$ and ${O} \left( \exp\left\{{-c\sqrt{\frac{T}{n \log(n/\delta)} }} \right\} \right)$ for $\kappa=1$, ignoring $\text{polylog}$ factors.

{ 
\section{Conclusion} \label{conclusion}
This paper presented lower bounds on the performance of derivative-free optimization for (i) an oracle that provides noisy function evaluations and (ii) an oracle that provides probably correct boolean comparisons between function evaluations. Our results were proven for the class of strongly convex functions but because this class is a subset of all, possibly non-convex functions, our lower bounds hold for much larger classes as well. Under both oracle models we showed that the expected error decays like $\Omega\left( (n/T)^{1/2} \right)$. Furthermore, for the class of strongly convex functions with Lipschitz gradients, we proposed an algorithm that achieves a rate of  $\widetilde{O}\left(n (n/T)^{1/2} \right)$ for both oracle models which shows that the lower bounds are tight with respect to the dependence on the number of iterations $T$ and no more than a factor of $n$ off in terms of the dimension.

A number of open questions still remain. In particular, one would like to resolve the gap between the lower  and upper bounds with respect to the dependence on the dimension. Due to real world constraints, it is also desirable to extend the pairwise comparison algorithm to operate under the conditions of constrained optimization where $\B$ is a convex, proper subset of $\R^d$. Also, while the analysis of our algorithm relies heavily on the assumption that the function is strongly convex with Lipschitz gradients, it is unclear whether these assumptions are necessary to achieve the same rates of convergence. Developing a practical algorithm that achieves our lower bounds and does not suffer from these limiting assumptions would be a significant contribution.

  }
  
   \newpage 
 
  \bibliography{optAlgorithm2_SConly}

  \newpage
  
\appendix

\section{Bounds on $(\kappa,\mu,\delta_0)$ for some distributions} \label{distributionParams}
In this section we relate the function evaluation oracle to the function comparison oracle for some common distributions. That is, if $E_f(x) = f(x) + w$ for some random variable $w$, we lower bound the probability $\eta(y,x) := \P( \text{sign}\{E_f(y)-E_f(x) \} = \text{sign}\{f(y)-f(x)\} )$ in terms of the parameterization of $(\ref{funcCompModel})$.
\begin{lemma}
Let $w$ be a Gaussian random variable with mean zero and variance $\sigma^2$. Then ${\eta(y,x) \geq \frac{1}{2} + \min\left\{  \frac{1}{\sqrt{2 \pi e}} , \frac{1}{\sqrt{4\pi \sigma^2 e}} |f(y)-f(x)|\right\}}$.
\end{lemma}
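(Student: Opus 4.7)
The plan is to reduce the claim to a one-dimensional bound on the standard normal CDF and then handle two regimes separately. Let $\Delta := f(y)-f(x)$ and note that
\begin{align*}
E_f(y)-E_f(x) = \Delta + (w_y - w_x),
\end{align*}
where $w_y-w_x \sim \mathcal{N}(0,2\sigma^2)$ since the oracle noises are independent Gaussians. Assume WLOG $\Delta \neq 0$, and by symmetry we may reduce to $\Delta>0$ (the case $\Delta<0$ is identical after negating; if $\Delta=0$ the statement is trivial). Then $\eta(y,x) = \P(\Delta + Z > 0) = \Phi(u)$, where $Z\sim\mathcal{N}(0,2\sigma^2)$ and $u := \Delta/\sqrt{2\sigma^2}$. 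A short algebraic simplification shows the claimed lower bound reads
\begin{align*}
\Phi(u) - \tfrac{1}{2} \;\geq\; \min\!\left\{\tfrac{1}{\sqrt{2\pi e}},\; \tfrac{u}{\sqrt{2\pi e}}\right\},
\end{align*}
since $|f(y)-f(x)|/\sqrt{4\pi\sigma^2 e} = u/\sqrt{2\pi e}$.

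Next I would split on the value of $u$. For $u \geq 1$, the right-hand side equals $1/\sqrt{2\pi e} \approx 0.2420$, while $\Phi(1)-1/2 \approx 0.3413$; monotonicity of $\Phi$ then gives the bound for all $u \geq 1$. For $0 \leq u \leq 1$, write
\begin{align*}
\Phi(u) - \tfrac{1}{2} \;=\; \int_0^u \phi(t)\,dt,
\end{align*}
where $\phi$ is the standard normal density. The key observation is that $\phi$ is decreasing on $[0,\infty)$, so for $t\in[0,u]\subset[0,1]$ we have $\phi(t) \geq \phi(1) = \frac{1}{\sqrt{2\pi}}e^{-1/2} = \frac{1}{\sqrt{2\pi e}}$. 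Integrating yields $\Phi(u)-1/2 \geq u/\sqrt{2\pi e}$, exactly matching the active branch of the minimum in this regime.

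No step looks like a genuine obstacle. The only slightly delicate point is matching the constants: verifying that $\phi(1)$ really equals the coefficient $1/\sqrt{2\pi e}$ appearing in the statement, and that the change of variable $u = \Delta/\sqrt{2\sigma^2}$ converts the $|f(y)-f(x)|/\sqrt{4\pi\sigma^2 e}$ coefficient into $u/\sqrt{2\pi e}$; both are direct. The choice of the threshold $u=1$ is exactly the transition where $\phi(1)$ (the worst-case density on $[0,1]$) equals $1/\sqrt{2\pi e}$, which is why the two branches of the minimum fit together cleanly.
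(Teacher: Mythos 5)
Your proof is correct and follows essentially the same route as the paper's: both reduce $\eta(y,x)$ to $\Phi\bigl(|f(y)-f(x)|/\sqrt{2\sigma^2}\bigr)$ and then use the fact that the standard normal density is at least $\phi(1)=1/\sqrt{2\pi e}$ on $[0,1]$. The paper phrases this as lower-bounding the density by $\frac{1}{\sqrt{2\pi e}}\1\{|Z|\le 1\}$ and integrating once (which produces the $\min\{u,1\}$ directly), whereas you split into the cases $u\le 1$ and $u\ge 1$; this is only a presentational difference.
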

\begin{proof}
Notice that $ \eta(y,x)=\P( Z + | f(y)-f(x) | / \sqrt{2 \sigma^2} \geq 0 ) $ where $Z$ is a standard normal. The result follows by lower bounding the density of $Z$ by $ \frac{1}{\sqrt{2 \pi e}} \1\{ |Z| \leq 1 \}$ and integrating where $\1\{ \cdot \}$ is equal to one when its arguments are true and zero otherwise.
\end{proof}

We say $w$ is a 2-sided gamma distributed random variable if its density is given by $\frac{\beta^\alpha}{2 \Gamma(\alpha)} |x|^{\alpha-1} e^{-\beta |x|}$ for $x \in [- \infty, \infty]$ and $\alpha, \beta>0$. Note that this distribution is unimodal only for $\alpha \in (0,1]$ and is equal to a Laplace distribution for $\alpha = 1$. This distribution has variance $\sigma^2 = \alpha / \beta^2$.
 \begin{lemma}
Let $w$ be a 2-sided gamma distributed random variable with parameters $\alpha \in (0,1]$ and $\beta >0$. Then ${\eta(y,x) \geq \frac{1}{2} + \min\left\{ \frac{1}{4 \alpha^2 \Gamma(\alpha)^2} \left(\frac{\alpha}{e} \right)^{2\alpha} , \frac{(\beta/2e)^{2\alpha}}{4 \alpha^2 \Gamma(\alpha)^2} | f(y)-f(x) |^{2\alpha} \right\}}$.
\end{lemma}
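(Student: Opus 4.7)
The plan is to follow the structure of the preceding Gaussian lemma. Let $w_1, w_2$ be independent copies of $w$ corresponding to the two oracle queries, so $E_f(y)-E_f(x) = (f(y)-f(x)) + (w_2 - w_1)$. Writing $V := w_2 - w_1$ and $\Delta := |f(y)-f(x)|$, the symmetry of $w$ makes $V$ symmetric about $0$, and checking both signs of $f(y)-f(x)$ shows that $\eta(y,x) = \tfrac{1}{2} + \P(0 \leq V \leq \Delta)$. Thus the lemma reduces to lower bounding $\P(0 \leq V \leq \Delta)$ by the stated minimum.

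Rather than computing the density of $V$ explicitly (which is singular at $0$ when $\alpha<1/2$), I would use the simple inclusion
\[
\{w_1 \in [-M,0],\ w_2 \in [0,M]\} \ \subset\ \{0 \leq V \leq 2M\}
\]
for any $M > 0$ with $2M \leq \Delta$. By independence and symmetry of $w$ about $0$, the left-hand event has probability $\P(w \in [0,M])^2 = \tfrac{1}{4}\P(|w|\leq M)^2$. A direct calculation using the explicit $2$-sided gamma density together with $e^{-\beta x} \geq e^{-\beta M}$ on $[0,M]$ then gives
\[
\P(|w|\leq M) \ =\ \frac{\beta^\alpha}{\Gamma(\alpha)}\int_0^M x^{\alpha-1} e^{-\beta x}\,dx \ \geq\ \frac{(\beta M)^\alpha\, e^{-\beta M}}{\alpha\,\Gamma(\alpha)}.
\]

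The heart of the argument is then choosing $M$ as a function of $\Delta$. In the small-$\Delta$ regime $\beta \Delta \leq 2\alpha$, I would set $M = \Delta/2$; since $\beta M \leq \alpha$, the factor $e^{-\beta M} \geq e^{-\alpha}$ combines with $(\beta M)^\alpha$ to give $(\beta M)^\alpha e^{-\alpha} = (\beta\Delta/(2e))^\alpha$, and squaring and dividing by $4$ yields precisely the $\Delta^{2\alpha}$ term. In the large-$\Delta$ regime $\beta\Delta > 2\alpha$, I would instead set $M = \alpha/\beta$, which still satisfies $2M \leq \Delta$; at this choice $(\beta M)^\alpha e^{-\beta M} = (\alpha/e)^\alpha$, and the same squaring produces the dimensionless constant $(\alpha/e)^{2\alpha}/(4\alpha^2\Gamma(\alpha)^2)$. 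The two expressions agree at the boundary $\beta\Delta = 2\alpha$, so taking the minimum recovers a uniform lower bound on $\P(0 \leq V \leq \Delta)$.

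The only subtle step is the choice of $M$: the function $M \mapsto (\beta M)^\alpha e^{-\beta M}$ is maximized at $\beta M = \alpha$, which is exactly why the threshold $\beta\Delta = 2\alpha$ appears and why the constant term acquires the clean $(1/e)^{2\alpha}$ factor displayed in the lemma. Without absorbing the $e^{-\beta M}$ into the base through this specific choice, one would be left with an unwanted $\Delta$-dependent exponential in the large-$\Delta$ regime; everything else in the proof is routine algebra.
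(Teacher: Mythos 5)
Your proof is correct and follows essentially the same route as the paper's: both reduce $\eta(y,x)-\tfrac12$ to the square of a one-sided probability $\P(-\Delta/2\leq w\leq 0)$ via independence and symmetry, and both obtain the stated minimum by effectively truncating the gamma density at $|x|\leq \alpha/\beta$ (the paper bounds the integrand by $e^{-\alpha}\1\{|x|\leq\alpha/\beta\}$, you restrict the domain to $[0,M]$ with $M=\min(\Delta/2,\alpha/\beta)$ and bound the exponential at the endpoint --- the resulting constants are identical). Your explicit case split at $\beta\Delta=2\alpha$ is just the paper's $\min$ written out.
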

\begin{proof} Let $E_f(y)=f(y)+w$ and $E_f(x) = f(x)+w'$ where $w$ and $w'$ are i.i.d. 2-sided gamma distributed random variables. If we lower bound $ e^{-\beta |x|}$ with $e^{-\alpha} \1\{ |x| \leq \alpha/\beta \}$ and integrate we find that $\P( -t/2 \leq w \leq 0) \geq  \min\left\{ \frac{1}{2 \alpha \Gamma(\alpha)} \left(\frac{\alpha}{e} \right)^\alpha , \frac{(\beta/e)^\alpha}{2 \alpha \Gamma(\alpha)} (t/2)^\alpha \right\}$. And by the symmetry and independence of $w$ and $w'$ we have ${\P(-t \leq w - w'  ) \geq \frac{1}{2} + \P( -t/2 \leq w \leq 0) \P( -t/2 \leq w \leq 0)}$. \end{proof}
While the bound in the lemma immediately above can be shown to be loose, these two lemmas are sufficient to show that the entire range of $\kappa \in (1,2]$ is possible. 

\section{Upper Bounds - Extended} \label{suppMaterials}

The algorithm that achieves the upper bound using a pairwise comparison oracle  is a combination of a few standard techniques and methods pulled from the convex optimization and statistical learning literature. The algorithm can be summarized as follows. At each iteration the algorithm picks a coordinate uniformly at random from the $n$ possible dimensions and  then performs an approximate line search. By exploiting the fact that the function is strongly convex  with  Lipschitz gradients, one guarantees using standard arguments that the approximate line search makes a sufficient decrease in the objective function value in expectation \cite[Ch.9.3]{boyd}.  If the pairwise comparison oracle made no errors then the approximate line search is accomplished by a binary-search-like scheme that is known in the literature as the golden section line-search algorithm \cite{brent}. However, when responses from the oracle are only probably correct we make the line-search robust to errors by repeating the same query until we can be confident about the true, uncorrupted direction of the pairwise comparison using a standard procedure from the active learning literature \cite{matti}.

\subsection{Coordinate descent algorithm}
\begin{figure}[h]
\centering
\fbox{\parbox[b]{3.9in}{{\underline{\bf $n$-dimensional Pairwise comparison algorithm}}    \\
Input: $x_0 \in \R^n$,  $\eta \geq0$\\
\textbf{For} k=0,1,2,\dots\\
\text{\ \ \ \ } Choose $d_k=\mathbf{e}_i$ for $i\in\{1,\dots,n\}$ chosen uniformly at random\\
\text{\ \ \ \ } Obtain $\alpha_k$ from a line-search such that \\
\text{\ \ \ \ \ \ \ \ } $|\alpha_k-\alpha^*|\leq\eta$ where $\alpha^*=  \arg \min_\alpha f( x_k + \alpha d_k) $\\
%\text{\ \ \ \ \ \ \ \ } $ f( x_k + \alpha_k d_k) - f( x_k) \leq  \min_\alpha f( x_k + \alpha d_k)  - f( x_k) + \delta$\\
\text{\ \ \ \ } $x_{k+1} = x_k + \alpha_k d_k$\\
\textbf{end}
}}
\caption{Algorithm to minimize a convex function in $d$ dimensions. Here $\mathbf{e}_i$ is understood to be a vector of all zeros with a one in the $i$th position.}
\label{fig:alg4d}
\end{figure} 

\begin{theorem} \label{coordDescentThm}
Let $f\in\F_{\tau,L,\B}$ with $\B = \R^n$. For any $\eta>0$ assume the line search in the algorithm of Figure~\ref{fig:alg4d} requires at most $T_\ell(\eta)$ queries from the pairwise comparison oracle.  If $x_K$ is an estimate of $x^* = \arg \min_x f(x)$ after requesting no more than $K$ pairwise comparisons, then    
\begin{align*}
\sup_f \E[ f(x_{K})-f(x_*) ] \leq \frac{4nL^2 \eta^2}{\tau}  \hspace{.3in} \text{ whenever } \hspace{.3in} K \geq \frac{4n L}{ \tau} \log\left(  \frac{f(x_0)-f(x^*)}{\eta^2 2 n L^2 / \tau}   \right) T_\ell(\eta)
\end{align*}
where  the expectation is with respect to the random choice of $d_k$ at each iteration.
\end{theorem}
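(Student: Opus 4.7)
The plan is to build on the per-iteration descent bound that the excerpt already derives and simply iterate it to obtain the claimed query complexity. In detail, I would take for granted the following three ingredients, which are all either already written above or completely standard: (i) the line search returns $\alpha_k$ with $|\alpha_k-\alpha^*|\le\eta$ after at most $T_\ell(\eta)$ oracle queries, by hypothesis; (ii) the Lipschitz-gradient bound gives $f(x_k+\alpha_k d_k)-f(x_k+\alpha^* d_k)\le \frac{L}{2}\eta^2$ together with $f(x_k+\hat\alpha_k d_k)-f(x_k)\le -\langle\nabla f(x_k),d_k\rangle^2/(2L)$ for the ideal step $\hat\alpha_k=-\langle\nabla f(x_k),d_k\rangle/L$; and (iii) strong convexity gives $\|\nabla f(x_k)\|^2\ge 2\tau(f(x_k)-f(x^*))$.

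The main step is to chain these and take expectation over the uniformly random coordinate $d_k=\mathbf{e}_{i_k}$. Since $\E_{d_k}[\langle\nabla f(x_k),d_k\rangle^2]=\|\nabla f(x_k)\|^2/n$, combining (ii) and (iii) yields the one-step contraction
\begin{equation*}
\rho_{k+1}\ \le\ \Bigl(1-\frac{\tau}{4nL}\Bigr)\rho_k+\frac{L}{2}\eta^2,
\end{equation*}
where $\rho_k:=\E[f(x_k)-f(x^*)]$. Rewriting this as an affine recursion centered at its fixed point $\rho_\infty=\frac{2nL^2\eta^2}{\tau}$ gives
\begin{equation*}
\rho_k-\rho_\infty\ \le\ \Bigl(1-\frac{\tau}{4nL}\Bigr)^{\!k}(\rho_0-\rho_\infty).
\end{equation*}
Hence to force $\rho_k\le 2\rho_\infty=\frac{4nL^2\eta^2}{\tau}$ it suffices to make the right-hand side at most $\rho_\infty$, i.e.\ to take $k\ge\frac{4nL}{\tau}\log\!\bigl(\tfrac{\rho_0}{\rho_\infty}\bigr)$, using $\log(1/(1-\tau/(4nL)))\ge\tau/(4nL)$. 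Multiplying the iteration count by the per-line-search budget $T_\ell(\eta)$ yields the stated bound on the total number $K$ of pairwise comparisons.

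There is no real obstacle here; the statement is essentially a bookkeeping corollary of the descent inequality already written out before the theorem. The only things one must be careful about are (a) the expectation in (ii) is conditional on $x_k$, so one takes a further expectation over the history before iterating; (b) $\rho_0-\rho_\infty$ may be negative, in which case the initial condition already satisfies the target accuracy and the bound is trivial; and (c) the logarithmic factor in $K$ must match $\log\bigl((f(x_0)-f(x^*))/(\eta^2\cdot 2nL^2/\tau)\bigr)$, which is exactly what the recursion produces. Thus the proof is a short composition of these pieces, with no step requiring new ideas beyond those already sketched.
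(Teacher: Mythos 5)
Your proposal is correct and follows essentially the same route as the paper's own proof: the identical chain of the line-search guarantee, the Lipschitz-gradient descent bound through the ideal step $\hat\alpha_k$, the expectation over the random coordinate, the strong-convexity lower bound on $\|\nabla f(x_k)\|^2$, and the affine recursion centered at the fixed point $\frac{2nL^2\eta^2}{\tau}$. The only cosmetic difference is that you invoke the standard Polyak--\L{}ojasiewicz constant $2\tau$ while the paper derives the weaker constant $\tau/2$ via Cauchy--Schwarz; either suffices for the stated contraction factor $\left(1-\frac{\tau}{4nL}\right)$.
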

\begin{proof}
First note that $||d_k|| =1$ for all $k$ with probability $1$.  Because the gradients of $f$ are Lipschitz $(L)$ we have from Taylor's theorem
\begin{align*}
f(x_{k+1}) \leq f(x_k) + \langle \nabla f(x_k), \alpha_k d_k \rangle + \frac{\alpha_k^{2} L}{2}.
\end{align*}
Note that the right-hand-side is convex in $\alpha_k$ and is minimized by 
\begin{align*}
\hat{\alpha_k} = -  \frac{ \langle \nabla f(x_k), d_k \rangle }{L} .
\end{align*}
However, recalling how $\alpha_k$ is chosen, if $\alpha^*=  \arg \min_\alpha f( x_k + \alpha d_k) $ then we have
\begin{align*}
 f( x_k + \alpha_k d_k) -  f( x_k + \alpha^* d_k) \leq \frac{L}{2} || (\alpha_k - \alpha^*) d_k||^2 =   \frac{L}{2} | \alpha_k - \alpha^*|^2 \leq \frac{L}{2} \eta^2.
\end{align*}
This implies 
\begin{align*}
f( x_k + \alpha_k d_k) - f( x_k) &\leq f( x_k + \alpha^* d_k)  - f( x_k) + \frac{L}{2} \eta^2\\
&\leq     f( x_k + \hat{\alpha}_k d_k)  - f( x_k) + \frac{L}{2} \eta^2\\
&\leq -   \frac{ \langle \nabla f(x_k), d_k \rangle^2 }{2L} + \frac{L}{2} \eta^2.
\end{align*}
Taking the expectation with respect to $d_k$, we have
\begin{align*}
\E \left[ f(x_{k+1}) \right] &\leq \E \left[ f(x_k) \right]  -  \E \left[ \frac{ \langle \nabla f(x_k), d_k \rangle^2 }{2L} \right] + \frac{L}{2} \eta^2\\
&= \E \left[ f(x_k) \right]  - \E \left[  \E \left[ \frac{ \langle \nabla f(x_k), d_k \rangle^2 }{2L}  \bigg| d_0,\dots,d_{k-1} \right] \right] + \frac{L}{2} \eta^2\\
&= \E \left[ f(x_k) \right]  - \E \left[ \frac{ || \nabla f(x_k)||^2 }{2nL} \right]  + \frac{L}{2} \eta^2
\end{align*}
where we applied the law of iterated expectation. Let $x^* = \arg \min_x f(x)$ and note that $x^*$ is a unique minimizer by strong convexity $(\tau)$. Using the previous calculation we have
\begin{align*}
\E\left[ f(x_{k+1})-f(x^*) \right] - \textstyle\frac{L}{2} \eta^2 &\leq \E \left[ f(x_k)-f(x^*) \right]-  \textstyle\frac{\E \left[ || \nabla f(x_k)||^2\right]  }{2nL}  \leq  \E \left[ f(x_k)-f(x^*) \right] \left( 1- \frac{ \tau}{4nL} \right)
\end{align*}
where the second inequality follows from
\begin{align*}
\left( f(x_k)-f(x^*) \right)^2 \leq& \left( \langle \nabla f(x_k), x_k-x^* \rangle \right)^2\\ 
\leq& || \nabla f(x_k) ||^2 || x_k-x^* ||^2 \leq || \nabla f(x_k) ||^2 \left(\frac{\tau}{2}\right)^{-1}\left( f(x_k)-f(x^*) \right).
\end{align*}
If we define $\rho_k := \E \left[ f(x_k) - f(x^*) \right]$ then we equivalently have
\begin{align*}
\rho_{k+1} -  \frac{2nL^2 \eta^2}{\tau} \leq \left( 1- \frac{\tau}{4nL} \right) \left(\rho_k -  \frac{2nL^2 \eta^2}{\tau} \right)  \leq \left( 1- \frac{\tau}{4nL} \right)^k \left(\rho_0 - \frac{2nL^2 \eta^2}{\tau}\right)
\end{align*}
which completes the proof.
\end{proof}
This implies that if we wish $\sup_f \E[ f(x_{K})-f(x_*) ]  \leq \epsilon$ it suffices to take $\eta = \sqrt{\frac{\epsilon \tau}{4nL^2}}$ so that at most $\frac{4n L}{ \tau} \log\left(  \frac{f(x_0)-f(x^*)}{\epsilon/2}   \right) T_\ell\left(\sqrt{\frac{\epsilon \tau}{4nL^2}}\right)$ pairwise comparisons are requested. 

\subsection{Line search}
This section is concerned with minimizing a function $f(x_k + \alpha d_k)$ over some $\alpha \in \R$. Because we are minimizing over a single variable, $\alpha$, we will restart the indexing at $0$ such that the line search algorithm produces a sequence $\alpha_0,\alpha_1,\dots,\alpha_{K'}$. This indexing should not be confused with the indexing of the iterates $x_1,x_2,\dots,x_K$. We will first present an algorithm that assumes the pairwise comparison oracle makes no errors and then extend the algorithm to account for the noise model introduced in Section~\ref{problemForm}.

Consider the algorithm of Figure~\ref{fig:alg1d}. At each iteration, one is guaranteed to eliminate at least $1/2$  the search space at each iteration such that at least $1/4$ the search space is discarded for every pairwise comparison that is requested. However, with a slight modification to the algorithm, one can guarantee a greater fraction of removal (see the golden section line-search algorithm). We use this sub-optimal version for simplicity because it will help provide intuition for how the robust version of the algorithm works.

\begin{figure}[h]
\centering
\fbox{\parbox[b]{3.7in}{{\underline{\bf One Dimensional Pairwise comparison algorithm}}    \\
Input: $x \in \R^n$, $d \in \R^n$, $\eta>0$\\
Initialize: $\alpha_0=0$, $\alpha_0^+= \alpha_0+1$, $\alpha_0^- = \alpha_0 - 1$, $k=0$\\
\textbf{If} \, $C_f(x, x+ d\, \alpha_0^+)>0$ and $C_f(x, x+ d\, \alpha_0^-)<0$\\
\text{ \ \ \ \ } $\alpha_0^+ = 0$\\
\textbf{end}\\
\textbf{If} \, $C_f(x, x+ d\, \alpha_0^-)>0$ and $C_f(x,x+ d\,  \alpha_0^+)<0$\\
\text{ \ \ \ \ } $\alpha_0^- = 0$\\
\textbf{end}\\
\textbf{While} \, $C_f(x, x+ d \, \alpha_k^+)<0$\\
\text{ \ \ \ \ } $\alpha_{k+1}^+ = 2\alpha_k^+$, $k=k+1$\\
\textbf{end}\\
\textbf{While} \, $C_f(x,x+ d \,  \alpha_k^-)<0$\\
\text{ \ \ \ \ } $\alpha_{k+1}^- = 2\alpha_k^- $, $k=k+1$\\
\textbf{end}\\
$\alpha_k = \frac{1}{2} (\alpha_k^-+ \alpha_k^+)$\\
%\textbf{For} k=0,1,2,\dots\\
\textbf{While} $|\alpha_k^+-\alpha_k^-| \geq \eta/2$\\
\text{\ \ \ \ } \textbf{if} \, $C_f(x+ d \, \alpha_k,x+ d \, \frac{1}{2} (\alpha_k+ \alpha_k^+))<0$\\
\text{\ \ \ \ \ \ } $\alpha_{k+1} = \frac{1}{2} (\alpha_k+ \alpha_k^+)$, $\alpha_{k+1}^+ = \alpha_k^+$, $\alpha_{k+1}^- = \alpha_k$\\
\text{\ \ \ \ } \textbf{else if} \, $C_f(x+ d \, \alpha_k,x+ d \, \frac{1}{2} (\alpha_k+ \alpha_k^-))<0$\\
\text{\ \ \ \ \ \ } $\alpha_{k+1} = \frac{1}{2} (\alpha_k+ \alpha_k^-)$, $\alpha_{k+1}^+ = \alpha_k$, $\alpha_{k+1}^- = \alpha_k^-$\\
\text{\ \ \ \ } \textbf{else} \\
\text{\ \ \ \ \ \ } $\alpha_{k+1} = \alpha_k$, $\alpha_{k+1}^+ = \frac{1}{2} (\alpha_k+ \alpha_k^+)$, $\alpha_{k+1}^- = \frac{1}{2} (\alpha_k+ \alpha_k^-)$\\
\text{\ \ \ \ } \textbf{end} \\
\textbf{end}\\
Output: $\alpha_k$
}}
\caption{Algorithm to minimize a convex function in one dimension.}
\label{fig:alg1d}
\end{figure} 

\begin{theorem} \label{noiselessThm}
Let $f\in\F_{\tau,L,\B}$ with $\B = \R^n$ and let $C_f$ be a function comparison oracle that makes no errors. Let $x \in \R^n$ be an initial position and let $d \in \R^n$ be a search direction with $||d||=1$. If $\alpha_K$ is an estimate of $\alpha^* = \arg \min_\alpha f(x+d \alpha)$ that is output from the  algorithm of Figure~\ref{fig:alg1d} after requesting no more than  $K$ pairwise comparisons, then for any $\eta >0$ 
\begin{align*}
|\alpha_K - \alpha^*| \leq \eta \hspace{.3in} \text{ whenever } \hspace{.3in} K \geq 2\log_2\left( \frac{ 256 L \left( f(x)- f(x+ d \, \alpha^*) \right)}{\tau^2 \eta^2} \right).\end{align*}
\end{theorem}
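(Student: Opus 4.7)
The plan is to reduce everything to the one-dimensional restriction $g(\alpha) := f(x + \alpha d)$. Since $\|d\|=1$, the function $g$ inherits strong convexity with constant $\tau$ and a Lipschitz gradient with constant $L$. Strong convexity evaluated at $\alpha^*$ and $0$ gives the a priori bound $|\alpha^*| \le \sqrt{2(g(0)-g(\alpha^*))/\tau} = \sqrt{2\Delta/\tau}$, where $\Delta := f(x)-f(x+d\alpha^*)$. The proof will then show that (i) the expansion phase uses a logarithmic number of comparisons and produces a bracket $[\alpha_k^-,\alpha_k^+]$ containing $\alpha^*$ of controlled length, and (ii) the bisection phase maintains this bracket as an invariant while halving its length at the cost of at most two comparisons per step.

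First I would analyze the expansion loops. Because $g$ is strictly convex, a single comparison $C_f(x, x+d\alpha) = \mathrm{sign}(g(\alpha)-g(0))$ tells us on which side of the reflected minimum $\alpha$ lies, and the initial \textbf{if} blocks correctly set $\alpha_0^+$ or $\alpha_0^-$ to $0$ when $\alpha^*$ is already known to lie strictly to one side of the origin. Once the bracket is oriented correctly, the \textbf{while} loop doubles one endpoint until $g(\alpha_k^+)\ge g(0)$ (respectively $g(\alpha_k^-)\ge g(0)$). Combining the upper bound $g(0)\le g(\alpha^*) + (L/2)(\alpha^*)^2$ from Lipschitz gradients with the lower bound $g(\alpha)\ge g(\alpha^*) + (\tau/2)(\alpha-\alpha^*)^2$ from strong convexity, this condition is triggered as soon as $|\alpha_k^+ - \alpha^*| \ge \sqrt{L/\tau}\,|\alpha^*|$. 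In particular, the expansion stops with $|\alpha_k^\pm| = O(\sqrt{L/\tau}\,|\alpha^*|) = O(\sqrt{L\Delta/\tau^2})$, so the number of doublings is at most $\log_2 O(\sqrt{L\Delta/\tau^2})$ and the resulting bracket has length $S_0 \le C\sqrt{L\Delta/\tau^2}$ for an explicit constant $C$.

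Next I would analyze the bisection phase. The key invariant to verify by induction is that $\alpha^* \in [\alpha_k^-, \alpha_k^+]$ and that $\alpha_k$ lies at the midpoint. A three-case analysis using the noiseless comparisons $C_f(x+d\alpha_k, x+d\tfrac{1}{2}(\alpha_k+\alpha_k^+))$ and $C_f(x+d\alpha_k, x+d\tfrac{1}{2}(\alpha_k+\alpha_k^-))$ identifies which of the three subintervals of length $\tfrac{1}{2}|\alpha_k^+-\alpha_k^-|$ (left quarter, right quarter, or the middle half) cannot contain $\alpha^*$ by unimodality of $g$, and discards it. In every case the new bracket length is exactly $\tfrac{1}{2}|\alpha_k^+-\alpha_k^-|$, and at most two oracle calls are consumed. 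Hence after $m$ bisection iterations the bracket length is $S_0\cdot 2^{-m}$, and we terminate as soon as this drops below $\eta/2$, giving $m \le \log_2(2 S_0/\eta)$ and at most $2m$ comparisons spent. Since $\alpha^*$ and the output $\alpha_k$ both lie in a bracket of length $<\eta/2$, the guarantee $|\alpha_k - \alpha^*|\le \eta$ follows. Adding the expansion and bisection counts, the total is bounded by $2\log_2(C' L\Delta/(\tau^2\eta^2))$, and choosing the constant conservatively yields $256$ as in the statement.

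The main obstacle will be the bookkeeping in the expansion phase: carefully checking that the initial conditional branches correctly handle all sign patterns of $g(1)-g(0)$ and $g(-1)-g(0)$, and that after either \textbf{while} terminates the endpoint produced really satisfies $g(\alpha_k^\pm) \ge g(0)$, which is what locks in $\alpha^*\in[\alpha_k^-,\alpha_k^+]$ via convexity. Everything else reduces to the standard bisection invariant plus the two elementary quadratic inequalities from strong convexity and Lipschitz gradient that bound $|\alpha^*|$ and the expansion length in terms of $\Delta$.
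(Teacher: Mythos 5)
Your overall route is the same as the paper's: an expansion/doubling phase that produces a bracket $[\alpha_k^-,\alpha_k^+]$ containing $\alpha^*$, followed by a bisection phase that halves the bracket at a cost of at most two comparisons per step, with strong convexity and the Lipschitz gradient supplying the two quadratic inequalities that control $|\alpha^*|$ and the bracket length. The bisection analysis, the invariant, and the termination argument are all fine.

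The genuine gap is in your claim that the expansion phase terminates with bracket length $S_0 \le C\sqrt{L/\tau}\,|\alpha^*| = O\bigl(\sqrt{L\Delta/\tau^2}\bigr)$. This fails whenever $|\alpha^*|$ is small: the algorithm initializes the bracket at $[-1,1]$, the initial \textbf{if} blocks can at best trim it to $[0,1]$ or $[-1,0]$, and the \textbf{while} loops only ever enlarge it. So $S_0 \ge 1$ always, while $\sqrt{L\Delta/\tau^2}$ can be arbitrarily small (take $\alpha^* = 0.1$ with $g$ quadratic and $L=\tau$: neither \textbf{if} fires, neither \textbf{while} executes, and the bracket stays at $[-1,1]$ of length $2$, not $O(0.1)$). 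Consequently your bound on the number of bisection steps, $\log_2(2S_0/\eta)$, is not justified by your chain of inequalities in this regime. The paper's proof repairs exactly this by splitting into the cases $|\alpha^*|\ge 1$ and $|\alpha^*|\le 1$: in the first case strong convexity gives $S_0 \le (8\Delta/\tau)^{1/2}$ and a total of $2\log_2\bigl(32\Delta/(\tau\eta)\bigr)$ comparisons; in the second the bracket is bounded by $2$, the expansion cost by $2+\tfrac12\log_2(L/\tau)$, and the total by $2\log_2\bigl(8L/(\tau\eta)\bigr)$. The constant $256 = 32\times 8$ and the appearance of $L$ in the numerator of the theorem's bound come precisely from summing these two case bounds, so you cannot recover the stated constant without the case split. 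Adding that split (and noting, as the paper implicitly does, that the statement is only meaningful for $\eta$ small enough that both logarithms are positive) completes your argument.
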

\begin{proof}
First note that if $\alpha_K$ is output from the algorithm, we have $\frac{1}{2} |\alpha_K-\alpha^*| \leq  |\alpha_K^+-\alpha_K^-| \leq \frac{1}{2} \eta$, as desired.  

We will handle the cases when $|\alpha^*|$ is greater than one and less than one separately. First assume that $|\alpha^*| \geq 1$. Using the fact that $f$ is strongly convex $(\tau)$, it is straightforward to show that immediately after exiting the initial while loops, $(i)$  at most $2+ \frac{1}{2}\log_2\left( \frac{8}{\tau} \left( f(x)- f(x+ d \, \alpha^*) \right) \right)$ pairwise comparisons were requested, $(ii)$  $\alpha_* \in [\alpha_k^-, \, \alpha_k^+]$, and $(iii)$ $|\alpha_k^+-\alpha_k^-| \leq \left( \frac{8}{\tau} \left( f(x)- f(x+ d\, \alpha^*) \right) \right)^{1/2}$. We also have that $\alpha_* \in [\alpha_{k+1}^-, \, \alpha_{k+1}^+]$ if $\alpha_* \in [\alpha_k^-, \, \alpha_k^+]$ for all $k$.  Thus, it follows that
\begin{align*}
|\alpha_{k+l}^+-\alpha_{k+l}^-| =  2^{-l} |\alpha_k^+-\alpha_k^-|  \leq 2^{-l} \left( \frac{8}{\tau} \left( f(x)- f(x+ d\, \alpha^*) \right) \right)^{1/2}.
\end{align*} 
To make the right-hand-side less than or equal to $\eta/2$, set $l = \log_2\left( \frac{\left( \frac{8}{\tau} \left(  f(x)- f(x+ d\, \alpha^*) \right) \right)^{1/2}}{\eta/2} \right)$. This brings the total number of pairwise comparison requests to no more than $2\log_2\left( \frac{ 32 \left(  f(x)- f(x+ d\, \alpha^*) \right)}{\tau \eta} \right)$.

Now  assume that $|\alpha^*| \leq 1$. A straightforward calculation shows that the while loops will terminate after requesting at most $2+ \frac{1}{2} \log_2 \left( \frac{L}{\tau} \right)$ pairwise comparisons. And immediately after exiting the while loops we have $|\alpha_k^+-\alpha_k^-| \leq 2$. It follows by the same arguments of above that if we want $|\alpha_{k+l}^+-\alpha_{k+l}^-|  \leq \eta/2$ it suffices to set $l=\log_2\left( \frac{4}{\eta} \right)$. This brings the total number of pairwise comparison requests to no more than $2 \log_2\left( \frac{8L}{\tau \eta} \right) $. For sufficiently small $\eta$ both cases are positive and the result follows from adding the two.
\end{proof}

This implies that if the function comparison oracle makes no errors and it is given an iterate $x_k$ and direction $d_k$ then $T_\ell\left(\sqrt{\frac{\epsilon \tau}{4nL^2}}\right) \leq 2\log_2\left( \frac{ 2048 n L^2 \left( f(x_k)- f(x_k+ d_k \, \alpha^*) \right)}{\tau^3 \epsilon} \right)$ which brings the total number of pairwise comparisons requested to at most $\frac{8 nL}{\tau} \log\left(  \frac{f(x_0)-f(x^*)}{\epsilon/2}   \right) \log_2\left( \frac{ 2048 n L^2 \max_k \left( f(x_k)- f(x_k+ d_k \, \alpha^*) \right)}{\tau^3 \epsilon} \right)$. 

\subsection{Proof of Theorem \ref{comparisonThmUp}}
We now introduce a line search algorithm that is robust to a function comparison oracle that makes errors. Essentially, the algorithm consists of nothing more than repeatedly querying the same random pairwise comparison. This strategy applied to active learning is well known because of its simplicity and its ability to adapt to unknown noise conditions \cite{matti}. However, we mention that when used in this way, this sampling procedure is known to be sub-optimal so in practice, one may want to implement a more efficient approach like that of \cite{castroNowak}. Consider the subroutine of Figure~\ref{fig:repeatedQueries}.

\begin{figure}[h]
\centering
\fbox{\parbox[b]{3.7in}{{\underline{\bf Repeated querying subroutine}}    \\
Input: $x,y \in \R^n$, $\delta>0$\\
Initialize: $S = \emptyset$, $l=-1$\\
\textbf{do} \\
\text{\ \ \ \ } $l=l+1$\\
\text{\ \ \ \ } $\Delta_l = \sqrt{\frac{(l+1)\log(2 /\delta)}{2^{l}}}$\\
\text{\ \ \ \ } $S = S \cup \{ 2^l \text{ i.i.d. draws of  } C_f(x,y) \}$ \\
 \textbf{while} $\left| \frac{1}{2} \sum_{e_i \in S}e_i \right| - \Delta_l <0$ \\
 \textbf{return} $\text{sign} \left\{ \sum_{e_i \in S}e_i \right\}$.
 }}
\caption{Subroutine that estimates $\E \left[C_f(x,y) \right]$ by repeatedly querying the random variable.}
\label{fig:repeatedQueries}
\end{figure}

\begin{lemma} \cite{matti} \label{lemmaRepeat2}
For any $x,y \in \R^n$ with $\P \left( C_f(x,y)= \sign\{f(y)-f(x)\} \right) = p$, then with probability at least $1-\delta$ the algorithm of Figure~\ref{fig:repeatedQueries} correctly identifies the sign of $\E\left[ C_f(x,y) \right]$ and requests no more than 
\begin{align*}
\frac{\log(2 /\delta)}{4|1/2-p|^2} \log_2 \left( \frac{ \log(2 /\delta)}{4|1/2-p|^2}  \right) 
\end{align*}
pairwise comparisons. \end{lemma}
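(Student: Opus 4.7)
The plan is to follow the standard doubling-trick argument used in the sequential testing / active learning literature (essentially the analysis of Karp--Kleinberg). At each round $l$ the subroutine has accumulated $N_l := 2^{l+1}-1$ i.i.d.\ Bernoulli-like samples $e_i \in \{-1,+1\}$ with true mean $\bar{e} := \E[C_f(x,y)] = 2p-1$, and it compares the absolute sum against the threshold $\Delta_l = \sqrt{(l+1)\log(2/\delta)/2^{l}}$. The two things to prove are (i) on a high-probability event, the returned sign equals $\operatorname{sign}(\bar e)$, and (ii) that event occurs by round $l^\star$ where $2^{l^\star}$ is of the claimed order.

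First I would construct the good event by applying Hoeffding's inequality at each round $l$ with failure budget $\delta_l = \delta/2^{l+1}$, so that $\sum_{l \geq 0} \delta_l \leq \delta$. A union bound then guarantees that, simultaneously for all $l$, the empirical sum $\sum_{e_i \in S_l} e_i$ lies within an additive $O(\sqrt{N_l \log(1/\delta_l)})$ of $N_l \bar e$; choosing the constants so that the half-width matches $\Delta_l$ (up to the normalization by $1/2$ used in the pseudocode) is routine. On this event, whenever the stopping condition $|\tfrac12 \sum e_i| \geq \Delta_l$ first triggers, the sign of the empirical sum must agree with the sign of the true mean $\bar e$, because otherwise the empirical sum would be separated from $N_l \bar e$ by more than the Hoeffding slack. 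This establishes correctness.

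For the sample-complexity bound I would show that the stopping condition must trigger once $\Delta_l$ is a sufficiently small constant multiple of $|1/2-p|$. Indeed, on the good event $|\tfrac12\sum e_i| \geq \tfrac{N_l}{2}|2p-1| - \Delta_l$, so the algorithm stops as soon as $\tfrac{N_l}{2}|2p-1| \geq 2\Delta_l$, i.e.\ as soon as $2^l \gtrsim (l+1)\log(2/\delta)/|1/2-p|^2$. Writing $r := \log(2/\delta)/(4|1/2-p|^2)$, this inequality is satisfied at the first $l^\star = O(\log r)$ for which $2^{l^\star} \geq C\, r\, l^\star$, and the total number of queries is $N_{l^\star} = 2^{l^\star+1}-1 = O(r \log r)$, matching the stated bound $\frac{\log(2/\delta)}{4|1/2-p|^2}\log_2\!\left(\frac{\log(2/\delta)}{4|1/2-p|^2}\right)$.

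The main obstacle is bookkeeping rather than any conceptual difficulty: one must verify that the specific constants in $\Delta_l = \sqrt{(l+1)\log(2/\delta)/2^l}$ (in particular the $l+1$ factor, which provides the geometric summability for the union bound) yield exactly the claimed coefficient, and one must handle the discrepancy between the normalization $\tfrac12\sum e_i$ in the pseudocode and the natural empirical mean $\tfrac{1}{N_l}\sum e_i$. Since the statement is quoted verbatim from \cite{matti}, I would in practice simply cite that reference for the precise constants and include only the two-line sketch above.
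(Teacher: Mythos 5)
Your proposal is sound: the paper itself gives no proof of this lemma (it is imported verbatim from \cite{matti}), and the argument you sketch --- a Hoeffding bound at each doubling round with failure budget $\delta/2^{l+1}$, a union bound made summable by the $(l+1)$ factor in $\Delta_l$, sign-correctness at the first triggering of the stopping rule, and termination once $\Delta_{l}$ drops below a constant multiple of $|1/2-p|$ --- is exactly the standard analysis underlying the cited coin-tossing procedure. You also correctly flag the one genuine wrinkle, namely that the test $|\tfrac12\sum_{e_i\in S} e_i| \geq \Delta_l$ in Figure~\ref{fig:repeatedQueries} only makes sense if read as a condition on the empirical mean (the threshold $\Delta_l \sim \sqrt{(l+1)\log(2/\delta)/2^{l}}$ shrinks while an unnormalized sum grows, so taken literally the loop would exit almost immediately even for $p=1/2$); resolving that normalization, and the factor-of-two bookkeeping in matching the exact coefficient $\frac{\log(2/\delta)}{4|1/2-p|^2}$, is the only work left, and deferring it to \cite{matti} is what the paper does as well.
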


It would be convenient if we could simply apply the result of Lemma~\ref{lemmaRepeat} to the algorithm of Figure~\ref{fig:alg1d}. Unfortunately, if we do this there is no guarantee that $|f(y)- f(x)|$ is bounded below so for the case when $\kappa >1$,  it would be impossible to lower bound $|1/2-p|$ in the lemma. To account for this, we will sample at four points per iteration as opposed to just two in the noiseless algorithm to ensure that we can always lower bound $|1/2-p|$. We will see that the algorithm and analysis naturally adapts to when $\kappa =1$ or $\kappa >1$.

Consider the following modification to the algorithm of Figure~\ref{fig:alg1d}. We discuss the sampling process that takes place in $[\alpha_k, \, \alpha_k^+]$ but it is understood that the same process is repeated symmetrically in $[\alpha_k^-, \,  \alpha_k]$. We begin with the first two \texttt{while} loops. Instead of repeatedly sampling $C_f(x, x+ d \, \alpha_k^+)$ we will have two sampling procedures running in parallel that repeatedly compare $\alpha_k$ to $ \alpha_k^+$ and $\alpha_k$ to $ 2\alpha_k^+$. As soon as the repeated sampling procedure terminates for one of them we terminate the second sampling strategy and proceed with what the noiseless algorithm would do with $\alpha_k^+$ assigned to be the sampling location that finished first. Once we're out of the initial \texttt{while} loops, instead of comparing $\alpha_k$ to $ \frac{1}{2} (\alpha_k+ \alpha_k^+)$ repeatedly, we will repeatedly compare $\alpha_k$ to $\frac{1}{3} (\alpha_k + \alpha_k^+)$ and $\alpha_k$ to $\frac{2}{3} (\alpha_k + \alpha_k^+)$. Again, we will treat the location that finishes its sampling first as $ \frac{1}{2} (\alpha_k+ \alpha_k^+)$ in the noiseless algorithm. 

If we perform this procedure every iteration, then at each iteration we are guaranteed to remove at least $1/3$ the search space, as opposed to $1/2$  in the noiseless case, so we realize that the number of iterations of the robust algorithm is within a constant factor of the number of iterations of the noiseless algorithm. However, unlike the noiseless case where at most two pairwise comparisons were requested at each iteration, we must now apply Lemma~\ref{lemmeRepeat} to determine the number of pairwise comparisons that are requested per iteration.  

Intuitively, the repeated sampling procedure requests the most pairwise comparisons when the distance between the two function evaluations being compared  smallest. This corresponds to when the distance between probe points is smallest, i.e. when $\eta/2 \leq |\alpha_k - \alpha^*| \leq \eta$. By considering this worst case, we can bound the number of pairwise comparisons that are requested at any iteration.  By strong convexity $(\tau)$ we find through a straightforward calculation that $\max\left\{ | f(x+ d\, \alpha_k) - f(x + d\, \frac{2}{3}(\alpha_k+\alpha_k^{+}))|, | f(x+ d\, \alpha_k) - f(x + d\, \frac{1}{3}(\alpha_k+\alpha_k^{+}))| \right\} \geq \frac{\tau}{18} \eta^2$ for all $k$. This implies  $|1/2-p| \geq  \mu \left( \frac{\tau}{18} \eta^2 \right)^{\kappa-1}$ so that on on any given call to the repeated querying subroutine, with probability at least $1-\delta$ the subroutine requests no more than $\widetilde{O}\left( \frac{\log(1/\delta)}{\left(\tau \eta^2 \right)^{2(\kappa-1)} } \right)$ pairwise comparisons. However, because we want the total number of calls to the subroutine to hold with probability $1-\delta$, not just one, we must union bound over   $4$ pairwise comparisons per iteration times the number of iterations per line search times the number of line searches. This brings the total number of calls to the repeated query subroutine to no more than $4 \times \frac{3}{2} \log_2\left( \frac{ 256 L \max_k\left( f(x_k)- f(x_k+ d_k \, \alpha_k^*) \right)}{\tau^2 \eta^2} \right) \times \frac{4n L}{ \tau} \log\left(  \frac{f(x_0)-f(x^*)}{\eta^2 2 n L^2 / \tau}   \right) = O\left( n\frac{L}{\tau} \log^2\left( \frac{f(x_0)-f(x^*)}{n  \eta^2} \right)\right)$. If we set $\eta = \left( \frac{\epsilon \tau}{4 n L^2} \right)^{1/2}$ so that  $\E\left[f(x_K)-f(x^*) \right] \leq \epsilon$ by Theorem~\ref{coordDescentThm}, then the total number of requested pairwise comparisons does not exceed
\begin{align*}
\widetilde{O}\left( \frac{ nL}{\tau}  \left(\frac{n}{\epsilon}\right)^{2(\kappa-1)} \log^2\left(  \frac{ f(x_0)-f(x^*)  }{\epsilon}   \right)  \log(n/\delta)  \right).
\end{align*}

By finding a $T>0$ that satisfies this bound for any $\epsilon$ we see that this is equivalent to a rate of ${O} \left( n \log(n/\delta) \left( \frac{n}{T} \right)^{\frac{1}{2(\kappa-1)}} \right)$ for $\kappa>1$ and ${O} \left( \exp\left\{{-c\sqrt{\frac{T}{n \log(n/\delta)} }} \right\} \right)$ for $\kappa=1$, ignoring $\text{polylog}$ factors.

 \end{document}